\documentclass[10pt,onecolumn, journal,cspaper,compsoc]{IEEEtran}

\usepackage{graphicx} 
\usepackage{subfigure}
\usepackage[square,sort,comma,numbers]{natbib}
\usepackage{algorithm2e}
\usepackage[noend]{algorithmic}
\usepackage{hyperref}
\usepackage{amsmath}
\usepackage{amsthm}
\usepackage{amssymb}
\usepackage{graphicx}
\usepackage{multirow}
\usepackage{url}
\usepackage{stfloats}
\usepackage{autobreak}
\allowdisplaybreaks

\newcommand{\tabref}[1]{Table~\ref{#1}}

\newtheorem{assumption}{Assumption}

\newtheorem{lemma}{Lemma}
\newtheorem{theorem}{Theorem}
\newtheorem{definition}{Definition}
%


%

%
\ifCLASSOPTIONcompsoc
\else
\fi
%

%
\ifCLASSINFOpdf
\else
\fi
\hyphenation{op-tical net-works semi-conduc-tor}

\begin{document}
%
\title{SADA: A General Framework to Support Robust Causation Discovery with Theoretical Guarantee}
%
%
%
%

\author{Ruichu~Cai,~\IEEEmembership{Member,~IEEE,}
        Zhenjie~Zhang,~\IEEEmembership{Member,~IEEE,}
        and~Zhifeng~Hao,~\IEEEmembership{Member,~IEEE}
\IEEEcompsocitemizethanks{\IEEEcompsocthanksitem Ruichu Cai and Zhifeng Hao are with the Faculty of Computer Science, Guangdong University of Technology, Guangzhou, P.R. China, 510006. \protect \\ E-mail: \{cairuichu, zfhao\}@gdut.edu.cn \protect\\
\IEEEcompsocthanksitem Zhenjie Zhang is with Advanced Digital Sciences Center, Illinois at Singapore Pte. Ltd. Singapore, 138632. \protect \\ E-mail:zhenjie@adsc.com.sg}
\thanks{}}

%
%

\markboth{}%
{Cai \MakeLowercase{\textit{et al.}}: SADA: A General Framework to Support Robust Causation Discovery with Theoretical Guarantee}
%


\IEEEcompsoctitleabstractindextext{%

\begin{abstract}
Causation discovery without manipulation is considered a crucial problem to a variety of applications. The state-of-the-art solutions are applicable only when large numbers of samples are available or the problem domain is sufficiently small. Motivated by the observations of the local sparsity properties on causal structures, we propose a general \emph{Split-and-Merge} framework, named SADA, to enhance the scalability of a wide class of causation discovery algorithms. In SADA, the variables are partitioned into subsets, by finding \emph{causal cut} on the sparse causal structure over the variables. By running mainstream causation discovery algorithms as basic causal solvers on the subproblems, complete causal structure can be reconstructed by combining the partial results. SADA benefits from the recursive division technique, since each small subproblem generates more accurate result under the same number of samples. We theoretically prove that SADA always reduces the scales of problems without sacrifice on accuracy, under the condition of local causal sparsity and reliable conditional independence tests. We also present sufficient condition to accuracy enhancement by SADA, even when the conditional independence tests are vulnerable. Extensive experiments on both simulated and real-world datasets verify the improvements on scalability and accuracy by applying SADA together with existing causation discovery algorithms.
\end{abstract}


\begin{IEEEkeywords}
Causation discovery, Structure Learning, Scalability, Linear Non-Gaussian Model, Additive Noise Model
\end{IEEEkeywords}}

\maketitle

\IEEEdisplaynotcompsoctitleabstractindextext

%
\IEEEpeerreviewmaketitle

\section{Introduction}

Causation discovery plays an important role on a variety of scientific domains. Different from the mainstream statistical learning approaches, causation discovery tries to understand the data generation procedure, rather than characterizing the joint distribution of the observed variables only. It turns out that understanding causality in such procedures is essential to predict the consequences of interventions, which is the key to a large number of applications, such as genetic therapy, advertising campaign design, etc.


From computational perspective, causation discovery is usually formulated over a probabilistic graphical model on the variables under the assumption of faithfulness \cite{pearl2009causality}, in which the directed edges indicate causal relations. When it is unlikely to manipulate the samples in experiments, conditional independence tests are commonly employed to detect local causal structures among the variables \cite{pearl2009causality, spirtes2011}. Despite of the successes of these approaches on small problem domains and large sample bases, they usually fail to find true causalities, when huge number of equivalent structures over the graphical probabilistic models render exactly the same conditional independence.


To tackle the difficulties of causation discovery under non-experimental setting, researchers are recently resorting to asymmetrical relations between the cause-effect pairs under assumptions on the data generation process. The discovery ability is dramatically improved, by exploiting linear non-Gaussian assumption \cite{ShimizuJMLR06, ShimizuJMLR2011Direct}, nonlinear assumption \cite{Janzing2008nipsnonlinear}, discrete property \cite{Janzing2010jmlrdiscrete}, deterministic mechanism \cite{Daniusis2010Deterministic} and so on. When the variables are correlated under linear relations and the noises follow non-Gaussian distributions, for example, LiNGAM \cite{ShimizuJMLR06} and its variants \cite{ShimizuJMLR2011Direct} are known as the best causation discovery algorithms. However, the scalability of LiNGAM and its variants is still questionable, since they heavily depend on the independent component analysis (ICA) during the computation. To return robust results from ICA, it is necessary to feed a large bulk of samples, which are expected to be no smaller than the number of variables. Similar problems arise to other well known methods, e.g., \cite{Janzing2008nipsnonlinear}, \cite{Janzing2010jmlrdiscrete}, which are usually used to infer the causal directions over individual variable pairs.

Motivated by the common observations on the sparsity of causal structures, i.e., each variable usually only depends on a small number of parent variables, we derive a Scalable cAusation Discovery Algorithm (named SADA) in this article. SADA helps the existing causation algorithms to get rid of difficulties on small sample size in practice. Well designed conditional independence tests are conducted to partition the problem domain into small subproblems. With the same number of samples, existing causation discovery algorithm (taken as basic causal solvers) could generate more robust and accurate results on these small subproblems. Partial results from all subproblems are finally merged together, to return a complete picture of causalities among all the variables.

This framework is generic, as only faithfulness condition and causal sufficiency assumption are employed, so that it works well with different basic causal solvers by exploiting additional compatible data generation assumptions, such as linear non-Gaussian data and additive noise data. This framework is also theoretically solid, as it always returns correct and complete result under the optimal setting on conditional independence tests and basic causal solvers. Even when the conditional independence tests are vulnerable, the framework is capable of improving the recall and precision of the overall results, when the basic causal solver on the subproblems achieves sufficient enhancement on accuracy. Our experiments on synthetic and real datasets verify the genericity, superior scalability and effectiveness of our proposal, when applied together with two mainstream causation discovery algorithms.



The outline of the paper is listed as follows. Section \ref{sec:related} reviews existing studies on causation discovery problem. Section \ref{sec:framework} introduces the framework and algorithms to tackle the problem of small sample size. Section \ref{sec:theory1} provides theoretical analysis on the proposed framework with the assumption of ideal conditional independence tests. Section \ref{sec:theory2} extends the analysis to more general assumptions on vulnerable conditional independence tests. Section \ref{sec:exp} reports experimental results on both synthetic and real datasets, and Section \ref{sec:concl} finally concludes the paper.

\section{Related Work}\label{sec:related}

Causal Bayesian network (CBN) is part of the theoretical background of this work. Different from the traditional Bayesian network, each edge in a CBN is interpreted as a direct causal influence from the parent node to the child node \cite{pearl2009causality}. CBN has been used to model the causal structure in many real-world applications, e.g., the gene regulatory network \cite{yoo2002discovery,ellis2008learning} and causal feature selection \cite{aliferis_local_2010}.

A large number of works try to explore the conditional independence tests to learning the local structures of CBN, e.g. the well known PC algorithm \cite{kalisch2007pc,cai2011bassum}, Markov Blanket discovery methods \cite{zhu2007markov,spirtes2011}. These methods provide the elements of causal structures, and are usually considered as start points of the causation discovery methods \cite{pearl1991ic,aliferis_local_2010,cai2013causal}.

Pearl is one of the pioneers of the causal theory \cite{pearl2009causality}. Since Pearl's Inductive Causality \cite{pearl1991ic}, a large number of extensions are proposed by exploring the $V$-structure to determine the causal directions. Most of the extensions assume the acquisition of a sufficiently large sample set \cite{aliferis_local_2010}. Though there are studies aiming at the causation discovery under small sample size \cite{bromberg_improving_2009}, the actual number of the samples used in their empirical evaluations remains significantly larger than the number of variables. Cai's study \cite{cai2013causal} is another attempt under this category to extend the method to the high dimensional gene expression domain by exploiting the conflict relations among the local sub-structures. Recently, some partition based approaches are also proposed to improve the scalability of the structure learning based methods, such as Geng's recursive decomposition strategy \cite{XieJMLR08} and Yehezkel's autonomy identification based partition \cite{YehezkelJMLR09}. However, all these approaches, based on conditional independence tests, cannot distinguish two causality structures if they come from a so-called \emph{Markov equivalence class} \cite{pearl2009causality}, in which expensive intervention experiments are previously considered essential \cite{he_active_2008}.

Recently, a lot of methods are proposed to break the limitations of the methods purely under conditional independence tests, by exploiting the asymmetric properties in the generative progress, which brings a gleam of dawn to resolve the causal equivalence problem. Existing studies on this line can be categorized based on the adopted assumption on the noise type or data generation mechanism. \emph{Additive Noise Model} \cite{Janzing2008nipsnonlinear} and its variants highly depend on the independence relation between the causal variable and the noises, including, its generalization to post-linear \cite{zhang2009postnonlinear}, its variants on the discrete data \cite{Janzing2010jmlrdiscrete}. Information Geometry based method is developed for the deterministic causal relations \cite{Janzing2012AIInfoGeo} by exploring the asymmetric relation between the data distribution and the generation mechanism. Its extension exploits the Kernel Hilbert space embedding based measure to infer the asymmetric properties \cite{ChenZCS2014}. LiNGAM and its variants \cite{ShimizuJMLR06,ShimizuJMLR2011Direct}, assume that the data generating process is linear and the noise distributions are non-Gaussian. There are other studies relat to this topic, such as explaining the underlying theoretical foundation behind asymmetric property based methods \cite{Bastian2010COLT,Janzing2010Causal,Janzing2012AIInfoGeo}, addressing the latent variable problem\cite{tashiro2014parcelingam}, regression-based inference method \cite{Mooij2009ICMLRegression} and kernel independence test based causation discovery methods \cite{Zhang2012CoRRKCIT}. Inference the direction between a causal-effect pair is focus of these methods. Though, there are some works try to generalize the model to the case with more than two variables, for example \cite{ShimizuJMLR06,peters2012identifiability}, there is no existing work to address the sample size problem to the best of our knowledge.

Granger's causality \cite{Granger1969Econometrica} is another important subfield of causality, which uses Granger's causality test \cite{hacker2006tests} to determine whether one time series is useful in forecasting another. Recently, Granger' causality is extended to infer the gene regulatory networks from the time series gene expression data \cite{mukhopadhyay2007causality,lozano2009grouped}. Granger's work differs from traditional causation discovery techniques on two aspects. Firstly, compared with the conventional definition of causality, Granger's causality is more likely a regression method and does not reflect the true causality mechanism. Secondly, the temporal information is essential for Granger's causation discovery algorithms, which is expensive and some times impossible to collect.

\section{SADA Framework}\label{sec:framework}

\subsection{Preliminaries}

Assume that all samples from the problem domain contain information on $n$ different variables, i.e., $V=\{v_1,v_2\dots,v_n\}$. Let $D=\{x_1,x_2,\cdots, x_m\}$ denote an observation sample set. Each sample $x_i$ is a vector $x_i=\left(x_{i1},x_{i2},\ldots,x_{in},y_i\right)$, where $x_{ij}$ indicates the value of the sample $x_i$ on variable $v_j$ and $y_i$ is the target variable under investigation.

If $\mathcal{P}$ is a distribution over the domain of variables in $V$, we assume that there exists a causal Bayesian network $N$ faithful to the distribution $\mathcal{P}$. The network $N$ includes a directed acyclic graph $G$, each edge in which indicates a dependent relation between two variable nodes. Each edge is also associated with a conditional probability function which presents conditional probability distribution of the variables given the values of their parent variables. Following the common assumption of existing studies, we only consider problem domain meeting \emph{Faithfulness Condition} \cite{koller2009pgm}. Specifically, $\mathcal{P}$ and $N$ are faithful to each other, \emph{iff} every conditional independence entailed by $N$ corresponds to some Markov condition present in $\mathcal{P}$. Beside the faithfulness condition, \emph{Causal Sufficiency} \cite{koller2009pgm} is another assumption taken in this work, which assumes that there are no latent confounders of any two observed variables.

Due to the probabilistic nature, it is likely to find a huge number of equivalent Bayesian networks. Two different Bayesian Networks, $N_1$ and $N_2$, are Markov equivalent, if $N_1$ and $N_2$ entail exactly the same conditional independence relations among the variables. In all these Bayesian networks, Causal Bayesian network (CBN) is a special one in which each edge is interpreted as a direct causal relation between a parent node and a child node.

Generally speaking, it is difficult to distinguish CBN from independence equivalent Bayesian networks, unless additional assumptions are made. When the variables are correlated in linear relations and the noises follow non-Gaussian distributions independently, LiNGAM and its variants \cite{ShimizuJMLR06,ShimizuJMLR2011Direct} are known to return more accurate causations from uncontrollable samples. In particular, such assumption can be formulated by an equation, such that every variable $v_i = \sum_{v_j \in P\left(v_i\right)}{ A_{ij}\cdot v_j} + e_i$, where $P\left(v_i\right)$ contains all the parent variables of $v_i$, $A_{ij}$ is the linear dependence weight w.r.t. $v_i$ and its parent $v_j$, and $e_i$ is an non-Gaussian noise over $v_i$. Assume that the variables in $V$ are organized based on a topological order in the causal structure. The generation procedure of a sample could be written as $V = A\cdot V + E$. LiNGAM aims to find such a topological order and reconstructs the matrix $A$ by exploiting \emph{independence component analysis} (ICA) over the sample.

When assuming non-linear generation procedure \cite{Janzing2008nipsnonlinear} and discrete data domain \cite{Janzing2010jmlrdiscrete}, additive noise model provides another approach to utilize the asymmetric relations between causal variables and consequence variables. A regression model $v_i = f\left(v_j\right) + e_i$ is trained for each pair of variables $v_i$ and $v_j$. If the noise variable $e_i$ is independent of $v_j$, variable $v_j$ is returned as the cause of variable $v_i$. Note that algorithms under additive noise model are usually run over pairs of variables independently.

%

A common observation on the CBNs in real-world domains is the sparsity on the causal relations. Specifically, a variable usually only has a small number of causal variables in the CBN, regardless of the underlying true generative procedure. This property, however, is not fully exploited by the existing causation algorithms.


\subsection{Framework}
In SADA, the variables are partitioned into subsets, by utilizing \emph{causal cuts} on the variables based on conditional independence relations over the domain with sparse causal structure. To begin with, we present the definitions of \emph{causal cut} and \emph{causal cut set}.

\begin{definition}\emph{Causal Cut.}
Let $G=\left(V,E\right)$ denote a causal structure on the variable set $V$. Three disjoint variable subsets $\left(C,  V_1, V_2\right)$ of $V$ forms a \emph{causal cut} over $G$, \emph{if} (1) $C \cup V_1 \cup V_2 =V$; (2) there is no edge between $V_1$ and $V_2$ in $E$.
\end{definition}

\begin{definition}\emph{Causal Cut Set.} In a causal cut $\left(C,  V_1, V_2\right)$, the variable set $C$ is a \emph{causal cut set} if it ensures there is no edge between $V_1$ and $V_2$.
\end{definition}

Based on the above definitions, given a causal cut $\left(C,  V_1, V_2\right)$ over the problem $G=\left(V,E\right)$, one of the two following cases must hold for each directed edge $u \rightarrow v$ in $E$: (1) intra-causality: $\{u,v\}\subset V_1$, $\{u, v\}\subset V_2$ or $\{u, v\}\subset C$; and (2) inter-causality: $u \in V_1\cup V_2$ and $v \in C$, or $u \in C$ and $v \in V_1\cup V_2$. This intuition guarantees the independence between two subproblems on the variable sets $V_1 \cup C$ and $V_2 \cup C$, which paves the foundation for the 'Split-and-Merge' framework.

\begin{figure}[t]
\centering
\includegraphics[width=.6\columnwidth]{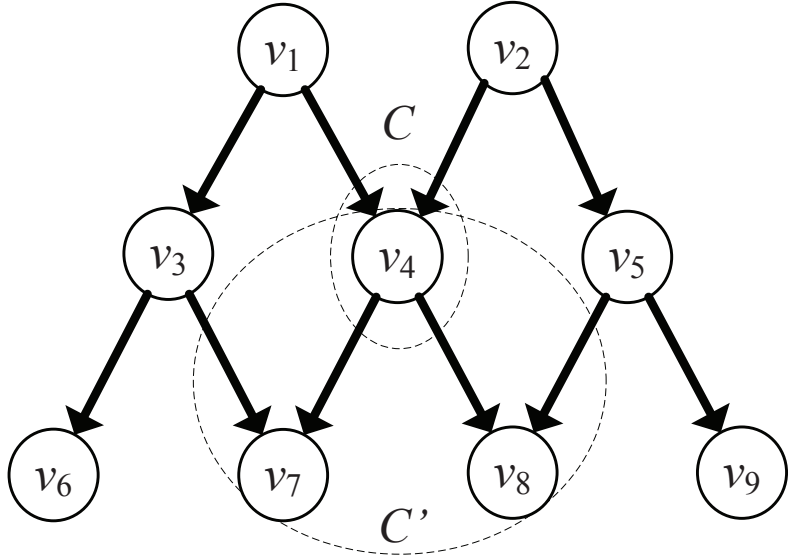}
\caption{An example probabilistic graphical model over 9 variables and two causal cuts with causal cut sets $C$ and $C'$.} \label{fig:example}
\end{figure}

In Figure \ref{fig:example}, for example, $C=\{v_4\}$ is a causal cut set, which separates the variables into the causal cut $\left(C=\{v_4\}, V_1=\{v_1, v_3, v_6,v_7\}, V_2=\{ v_2, v_5, v_8, v_9\}\right) $. Given a directed graph $G$, there could be different valid causal cut sets satisfying the above conditions. In the example graph, $C'=\{v_4, v_7, v_8\}$ is another causal cut set with $V_1=\{v_1, v_3, v_6\}$ and $V_2=\{ v_2, v_5, v_9\}$.

Please note that causal cut set is closely related to the concept of $d$-separation but it may not lead to $d$-separation. For example, in the causal cut $\left(C=\{v_4\}, V_1=\{v_1, v_3, v_6,v_7\}, V_2=\{ v_2, v_5, v_8, v_9\}\right)$ , the variable $v_7$ is not independent of $v_2$ given any subset of the causal cut set $C=\{v_4\}$; while in the  causal cut $\left( C'=\{v_4, v_7, v_8\}, V_1=\{v_1, v_3, v_6\}, V_2=\{ v_2, v_5, v_9\}\right)$, the causal cut set $C'$ lead to the $d$-separation between $V_1$ and $V_2$. The connection between causal cut and $d$-separation will be formalized in the next section.

Given a causal cut $\left(C,V_1,V_2\right)$ on variable set $V$, we are able to transfer the causation discovery problem on $V$ into two smaller ones over the variable sets $V_1\cup C$ and $V_2\cup C$ respectively. This partitioning operation could be recursively called, until the number of variables involved in the subproblem is below a specified threshold $\theta$. The complete pseudocodes are available in Algorithm \ref{algo:framework}. The inputs of SADA include the sample set $D$, the variables $V$, a threshold $\theta$ and an underlying causation discovery algorithm $A$. Here, $\theta$ is used to terminate the recursive partitioning when the variable set is sufficiently small, and $A$ is an arbitrary basic causal solver invoked to find the actual causal structure on the subset of variables. $A$ is usually taken as a basic causal solver in this work.

In the rest of the section, we will discuss how to effectively and efficiently find causal cut on a variable set $V$ by exploiting the corresponding observational samples. We will also present the details of the merging operator, which tackles the problem of inconsistency and redundancy on the partial results from the subproblems. Note that only the faithfulness condition and causal sufficiency assumption are employed in this proposed framework (except the basic causal solver), thus SADA can work with different basic causal solvers by exploiting additional data generation assumptions, such as linear non-Gaussian data, additive noise data and so on.

\begin{algorithm}[t]
\caption{\textbf{SADA}}\label{algo:framework}
\begin{algorithmic}
   \STATE Input: sample set $D$, variable set $V$, variable threshold $\theta$ and a basic causal solver $A$\\
   \STATE Output: $G$: causal structure \\
   \IF{$|V|\leq \theta$}
     \STATE Return the result $G$ by running algorithm $A$ on $D$ and $V$.
   \ENDIF
   \STATE Find a causal cut $\left(C,V_1,V_2\right)$ on $D$ and $V$.
   \STATE $G_1=$\textbf{SADA}$\left(D,V_1\cup C,\theta,A\right)$.
   \STATE $G_2=$\textbf{SADA}$\left(D,V_2\cup C,\theta,A\right)$.
   \STATE Return $G$ by merging $G_1$ and $G_2$.
\end{algorithmic}
\normalsize
\end{algorithm}

\subsection{Finding Causal Cut}

The searching of the causal cut is crucial to the partitioning operation in SADA. To identify potential causal cut, our algorithm resorts to conditional independence testing over variables in the Bayesian network. The following lemma formalizes the connection.

\begin{lemma} \label{lemma:properyofcvs}
$\left(C,V_1,V_2\right)$ is a causal cut over causal structure $G$, \emph{if} (1) $C \cup V_1\cup V_2 =V$; (2) $C$, $V_1$ and $V_2$ are disjoint with each other; and (3) $\forall u\in V_1$ and $\forall v \in V_2$, there exists a variable set $C_{uv} \subset C$ such that $u \bot v | C_{uv}$.
\end{lemma}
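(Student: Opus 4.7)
The plan is to argue by contradiction, using the faithfulness assumption to convert the observed conditional independence into a structural d-separation statement about $G$, and then deriving an impossibility from the fact that adjacent vertices in a DAG can never be d-separated.

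First, I would set up the contrapositive. Conditions (1) and (2) in the hypothesis already match clause (1) of the definition of causal cut, so the only thing left to show is that there is no edge in $E$ between any vertex of $V_1$ and any vertex of $V_2$. So suppose for contradiction that such an edge exists, i.e., there exist $u \in V_1$ and $v \in V_2$ with $u \to v \in E$ or $v \to u \in E$. By hypothesis (3), there is some $C_{uv} \subset C$ with $u \perp v \mid C_{uv}$ in the distribution $\mathcal{P}$.

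Next, I would invoke the faithfulness condition declared earlier in Section \ref{sec:framework}: every conditional independence in $\mathcal{P}$ corresponds to a d-separation in $G$. Therefore $u$ and $v$ are d-separated by $C_{uv}$ in $G$. However, any directed edge between $u$ and $v$ constitutes a path of length one that cannot be blocked by any conditioning set not containing $u$ or $v$ themselves, and since $C \subseteq V$ is disjoint from $V_1 \cup V_2$ we have $u, v \notin C_{uv}$. This is the desired contradiction, so no such edge can exist, and $(C, V_1, V_2)$ is a causal cut.

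The argument is essentially short once faithfulness is in hand; the only subtle point I would want to be careful about is making sure the disjointness clause (2) is really used to guarantee $u, v \notin C_{uv}$, since otherwise the d-separation/adjacency clash would not immediately follow. I do not anticipate this being a serious obstacle — the main work of the lemma is simply translating the graphical condition ``no edge between $V_1$ and $V_2$'' into a statement purely about conditional independence that can be verified from samples, and faithfulness makes that translation a one-line contrapositive.
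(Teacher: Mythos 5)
Your proof is correct and takes essentially the same route as the paper's (the paper states it directly --- conditional independence given subsets of $C$ yields $d$-separation, hence no edge between $V_1$ and $V_2$ --- while you phrase it as a contrapositive). Your version is in fact more careful than the paper's one-line argument, since you make explicit both the faithfulness direction being invoked and the role of disjointness in guaranteeing $u, v \notin C_{uv}$.
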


\begin{proof} For all pairs of variables $\left(u,v\right)$ that $u\in V_1$ and $v \in V_2$ are \emph{d}-separated by $C$, there is no directed edge between $V_1$ and $V_2$. Combining condition (1) and (2), we have that $\left(C,V_1,V_2\right)$ is a causal cut over causal structure $G$. \end{proof}

Note that Lemma \ref{lemma:properyofcvs} is a sufficient condition of causal cut, but not a necessary condition. That is, a causal cut $\left(C,V_1,V_2\right)$ may not satisfy the above three conditions. For example, the triple  $\left(C=\{v_4\}, V_1=\{v_1, v_3, v_6,v_7\}, V_2=\{ v_2, v_5, v_8, v_9\}\right) $ is a causal cut of the example given in Figure \ref{fig:example}, but it dose not satisfy the condition (3) of Lemma \ref{lemma:properyofcvs}, because the variable $v_7$ is not independent of $v_2$ given any subset of the causal cut set $C=\{v_4\}$. In detail, when $C_{uv}=\emptyset$, $v_7$ is dependent on $v_2$ because of the directed path $v_2 \rightarrow v_4\rightarrow v_7$; when $C_{uv}=\{v_4\}$, $v_7$ is dependent on $v_2$ because of the directed path $v_2 \rightarrow v_4\leftarrow v_1 \rightarrow v_3 \rightarrow v_7$ ( the path $v_2 \rightarrow v_4\leftarrow v_1$ is connected given the variable $v_4$).

By exploiting the sufficient condition given in Lemma \ref{lemma:properyofcvs}, we derive a new algorithm to find a causal cut set $C$, and the corresponding causal cut $\left( C, V_1, V_2 \right)$. In the search algorithm, each variable is heuristically assigned to one of the set $V_1$, $V_2$ and $C$. Besides of casual cut property, we also want to optimize the following two objectives during the assignment procedure: (1)minimizing the size of $C$. Because $C$ appears in both subproblems $V_1 \cup C$ and $V_1 \cup C$, i.e., smaller $C$ is preferred; (2) minimizing the size difference of $V_1$ and $V_2$. According to the principle of divide-and-conquer, the causal cut with similar sizes of $V_1$ and $V_2$ is preferred.
%
%

The details of the algorithms are listed in Algorithm \ref{algo:split}. The algorithm runs with $k$ different initial variable pairs. The algorithm greedily adds the variable $w$ into $V_1$ (or $V_2$), if $w$ is independent of all the variables of $V_2$ (or $V_1$) given some subset of $C$. Only the $w$ can not added to neither of $V_1$ and $V_2$, $w$ is added to $C$. After completing all assignments, the algorithm also tries to move the variables from $C$ to $V_1$ or $V_2$ to maximize the partitioning effect. Finally, the causal cut with largest $\min\{|V_1|,|V_2|\}$ are returned as final result. We leave the discussion on the parameters $k$ and $\theta$ to next section.
Please note that the sample size needed in the cut algorithm highly depends on the local connectivity of the causal structure but not on the number of variables. This is an important advantage of the algorithm to applications in large scale sparse causation discovery problems.

\begin{algorithm}[t]
\caption{\textbf{Finding Causal Cut}}\label{algo:split}
\begin{algorithmic}
   \STATE Input: sample set $D$, variable set $V$, number of initial variable pairs $k$
   \STATE Output: a causal cut $\left(C,V_1,V_2\right)$
   \FOR{$j=1$ to $k$}
   \STATE Randomly pick up two variables $u$ and $v$ such that $\exists V'\subset V-\{u,v\}$ satisfies $u\bot v| V'$.
   \STATE Find the smallest $\hat{V}\subseteq V-\{u,v\}$ to make $u\bot v | \hat{V}$.
   \STATE Initialize $V_1=\{u\}$, $V_2=\{v\}$ and $C=\hat{V}$.
   \STATE Remove variables in $V_1$, $V_2$ and $C$ from $V$.
   \FOR{each variable $w\in V$}
       \IF{ $\forall u \in V_1$, $\exists C'\subseteq C$ that $w \bot u |C'$}
         \STATE  Add $w$ into $V_2$.
       \ELSIF{$\forall v\in V_2$, $\exists C'\subseteq C$ that $w \bot v |C'$ }
         \STATE Add $w$ into $V_1$.
       \ELSE
         \STATE Add $w$ into $C$.
       \ENDIF
   \ENDFOR
   \FOR{each variable $s\in C$}
       \IF{$\forall u\in V_1$, $\exists C'\subseteq C-\{s\}$ that $s \bot u |C'$ }
          \STATE Move $s$ from $C$ to $V_2$.
       \ELSIF{$\forall v\in V_2$, $\exists C'\subseteq C-\{s\}$ that $s \bot v |C'$ }
          \STATE Move $s$ from $C$ to $V_1$.
       \ENDIF
   \ENDFOR
   \STATE Let $\Phi_j=\left(C,V_1,V_2\right)$
   \ENDFOR
   Return $\Phi_j$ with the largest $\min\{|V_1|,|V_2|\}$.
\end{algorithmic}
\normalsize
\end{algorithm}

Given the example structure of Figure \ref{fig:example}, the running step is given in Table \ref{tab:divide} under the assumption that $V_1$, $V_2$ and $C$ are initialized as $\{v_1\}$ , $\{v_2\}$ and $\emptyset$, respectively. Among the steps, $v_3$ and $v_6$ are marginally independent of any variable of the current $V_2$. In another word, $C'=\emptyset$ is used in the conditional independence test. The similar cases happen in the assignment of $v_5$ and $v_9$. In the checking of $v_7$, $v_7$ is dependent of the variables of $v_2$ given any sub set of the current causal cut set $C=\{v_4\}$ and added to the causal cut set $C$. $v_8$ is processed similarly.

\begin{table}[!h]
\center \caption{Running Example of Split}
\label{tab:divide}
\begin{tabular}{|c|c|c|c|c|}
\hline Step & $V$ & $V_1$ & C & $V_2$\\
\hline Initial & $v_3,v_4,v_5,v_6, v_7,v_8,v_9$ & $v_1$ & $\emptyset$ & $v_2$\\
\hline Check $v_3$ & $v_4,v_5,v_6, v_7,v_8,v_9$ & $v_1,v_3$ &  $\emptyset$ & $v_2$\\
\hline Check $v_4$ & $v_5,v_6, v_7,v_8,v_9$ & $v_1,v_3$ &  $v_4$ & $v_2$\\
\hline Check $v_5$ & $v_6, v_7,v_8,v_9$ & $v_1,v_3$ &  $v_4$ & $v_2,v_5$\\
\hline Check $v_6$ & $v_7,v_8,v_9$ & $v_1,v_3,v_6$ &  $v_4$ & $v_2,v_5$\\
\hline Check $v_7$ & $v_8,v_9$ & $v_1,v_3,v_6$ &  $v_4,v_7$ & $v_2,v_5$\\
\hline Check $v_8$ & $v_9$ & $v_1,v_3,v_6$ &  $v_4,v_7,v_8$ & $v_2,v_5$\\
\hline Check $v_9$ & $\emptyset$ & $v_1,v_3,v_6$ &  $v_4,v_7,v_8$ & $v_2,v_5,v_9$\\
\hline
\end{tabular}
\end{table}

\subsection{Merging Partial Results}\label{sec:merge}

As is shown in Algorithm \ref{algo:framework}, two partial results $G_1$ and $G_2$ are combined as a single casual graph as on variables in $V$. Since $G_1$ and $G_2$ are calculated independently but contain overlap over the causal cut set $C$. Thus conflict and redundancy need to be carefully handled in the merging operation. Recall the example given in Figure \ref{fig:example},  assume $C=\{v_4, v_7, v_8\}$, $V_1=\{v_1, v_3, v_4, v_6, v_7\}$ and $V_2=\{v_2, v_4, v_5, v_8, v_8, v_9\}$, the edges $v_7\rightarrow v_8$ and $v_8\rightarrow v_7$ may be appear in results from $V_1$ and $V_2$, generating conflicts. Similarly, the basic solver may return $v_4\rightarrow v_7$ and $v_7 \rightarrow v_8$ on $V_1$, and $v_4\rightarrow v_8$ on $V_2$. It is easy to see, the edge $v_7\rightarrow v_8$ is redundant. Generally speaking, such conflicts and redundancy depend on the assumption of the causal structures. Under the directed acyclic graph assumption, the examples shown in Fig 2 are all the patterns we can detect.

The general form of a conflict is a cycle of directed edges among a group of variables, as shown in Figure \ref{fig:merge_conflict}. Given two nodes $v_1$ and $v_2$, there are two paths co-existing, such as $v_1 \rightarrow  \cdots \rightarrow v_2$ and $v_1 \leftarrow v_2$. These two paths form a cycle and violate the acyclic constraints. To resolve such conflict, we simply remove the least reliable edge in the cycle, whenever a cycle is found. Here the reliability of the edge $v_1\rightarrow v_2$ is measured by the significance level, $sig\left(v_1\rightarrow v_2\right)$, which is returned by the basic causal solvers. For example, the $p$-value of the Wald test is used as the significance level for edges returned by LiNGAM \cite{ShimizuJMLR06}, and the $p$-value of the noise's independence of the causal variable is used as the significance level for edges returned by additive noise model \cite{Janzing2011tpamidiscrete}.

Figure \ref{fig:merge_redundancy} illuminates a potential redundancy case. Given two variables $v_1$ and $v_2$, if both $v_1 \rightarrow  \cdots \rightarrow v_2$ and $v_1 \rightarrow v_2$ are discovered, $v_1 \rightarrow v_2$ may be redundant. Because the dependency relation $v_1 \rightarrow v_2$ could be blocked by certain variables in the variable set $Path\left(v_1\rightarrow v_2\right)$. Here, $Path\left(v_1\rightarrow v_2\right)$ refers to the variable set involved in the directed path $v_1 \rightarrow  \cdots \rightarrow v_2$. Such redundancy raises when the following two conditions are satisfied: (1) the source and destination variables are both in the causal cut set, i.e., $v_1, v_2 \in C$, (2) there is another variable set $V_3\subset V_1$ (or $V_3\subset V_2$), such that $v_1 \rightarrow V_3 \rightarrow v_2$. If the above two conditions are met, one path $v_1 \rightarrow v_2$ will be returned from the subproblem over $V_1 \cup C$, while another path $v_1 \rightarrow v_2$ turns up from the other subproblem over $V_2 \cup C$. To tackle this problem, our merging algorithm runs the following conditional independence tests to verify if $\exists V' \subset Path\left(v_1\rightarrow v_2\right)$ that $v_1 \bot v_2 | V'$.

\begin{figure} [h]
  \centering
  \subfigure[Conflict]{
    \includegraphics[width=0.21\textwidth]{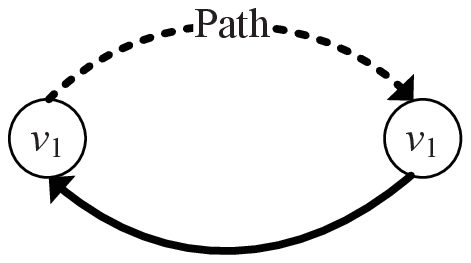}
    \label{fig:merge_conflict}
  }
  \subfigure[Potential Redundancy]{
    \includegraphics[width=0.21\textwidth]{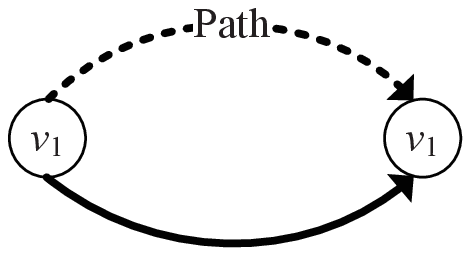}
    \label{fig:merge_redundancy}
  }
  \caption{Conflict and redundancy appearing in merge operation.}
  \label{fig:merge}
\end{figure}

To summarize, the merging operation works as follows. Firstly, all directed edges from both solutions are simply added into a single edge set. Secondly, edges are ranked according to the associated significance measure, calculated by the basic causal solver employed by SADA. Thirdly, a sequential conflict testings are run over the ordered edges non-decreasingly on the significance. An edge is removed if it is conflicted with any of the previous edges. Finally, the redundancy edges are discovered and removed based on results of the conditional independence tests. A complete description is available in Algorithm \ref{algo:merge}.

\begin{algorithm}[!h]
\caption{\textbf{Merge Results}}\label{algo:merge}
\begin{algorithmic}
   \STATE Input: {$G_1$, $G_2$: solutions to $V_1\cup C$ and $V_2\cup C$}\\
   \STATE Output:{$G$: solution for $C \cup V_1\cup V_2$}\\
   \STATE //basic merging
   \STATE $G=G_1 \cup G_2$;\\
   \STATE //conflict removal
   \STATE Sort edges in $G$ in descending order of significance;\\
   \STATE Mark all variable pairs as unreachable;
   \FOR {each $v_1\rightarrow v_2 \in G$}
   {
       \IF {$<v_2, v_1>$ is reachable}
       {
           \STATE $G=G - \{v_1\rightarrow v_2\}$;\\
       }
       \ELSE
       {
           \STATE Mark $\left(v_1, v_2\right)$ as reachable;
       }\ENDIF
   }\ENDFOR
   \STATE //redundancy removal
   \FOR{each $v_1\rightarrow v_2 \in G$}
   {
       \IF {$v_1\rightarrow \cdots \rightarrow v_2$ is in $G$}
       {
           \STATE Let $Path\left(v_1\rightarrow v_2\right)$ includes all variables involved in $v_1\rightarrow \cdots \rightarrow v_2$;\\
           \IF {$\exists V' \subset Path\left(v_1\rightarrow v_2\right)$ satisfies $v_1 \bot v_2 | V'$}
           {
               \STATE $G = G- \{v_1 \rightarrow v_2\}$;\\
           }\ENDIF
       }\ENDIF
   }\ENDFOR
   \RETURN  $G$;\\
\end{algorithmic}
\normalsize
\end{algorithm}

\section{Analysis under Reliable Conditional Independence Test}\label{sec:theory1}

In this section, we study the theoretical properties of SADA, especially on the effectiveness on problem scale reduction and consistency on causal results, under the assumption that no error is introduced by any conditional independence tests. The assumption is mathematically formulated as follows.

\begin{assumption}\label{assum:simple}
For any variables $v_1$, $v_2$ and variable set $V$, the conditional independence tests always return true, \emph{iff} $v_1\bot v_2|V$.
\end{assumption}

Intuitively, when the above assumption holds, the causal cut finding operator never generates wrong partitions which divides a causal-effect variable pair into two separate sets $V_1$ and $V_2$. Although such requirement is unlikely to meet in practice, it simplifies the model and allows us to derive accurate analysis on SADA in the rest of the section.

\subsection{Effectiveness on Scale Reduction}

In this part of the section, we aim to verify the effectiveness of the causal cut finding algorithm. In particular, we try to prove that the scale of the subproblem is significantly reduced by applying the randomized causal cut finding algorithm.

%

\begin{theorem}\label{theorem:convergence}
If every variable has no more than $d_m$ parental variables in CBN, by setting $k=\left(2d_m+2\right)^2$, Algorithm \ref{algo:split} returns a causal cut $\left(C,V_1,V_2\right)$ with probability at least 0.5, such that
$$\min\{|V_1|,|V_2|\}\geq \frac{|V|}{2d_m+2}$$
\end{theorem}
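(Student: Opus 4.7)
The plan is to combine a structural existence lemma with a probabilistic amplification over the $k$ random restarts of Algorithm~\ref{algo:split}. First, I would establish the following structural claim: in any DAG in which every node has at most $d_m$ parents, there exists a causal cut $(C^*, V_1^*, V_2^*)$ that also meets the sufficient condition in Lemma~\ref{lemma:properyofcvs}, i.e., $C^*$ is an actual $d$-separator between $V_1^*$ and $V_2^*$, with $|V_1^*|, |V_2^*| \geq |V|/(2d_m+2)$. A natural route is to use the in-degree bound to control the number of ``cross'' edges of any partition: lay the variables out in a topological order and use a counting argument over parent sets to show that some contiguous block of the order can be pulled into $C^*$, leaving the prefix and suffix as $V_1^*$ and $V_2^*$ with the claimed balance.

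Next, I would analyze the behavior of Algorithm~\ref{algo:split} in a single trial when the initial pair $(u,v)$ happens to land on opposite sides of such a structural cut, say $u \in V_1^*$ and $v \in V_2^*$. Under Assumption~\ref{assum:simple} every conditional independence test is exact, so the greedy tests coincide with $d$-separation checks on the true graph. The goal is to argue that in this lucky case every $w \in V_1^*$ is placed in $V_1$ and every $w \in V_2^*$ is placed in $V_2$, possibly after the post-processing step that moves variables out of $C$, because each such $w$ is $d$-separable from the opposite side through a subset of $C^*$, which is exactly the type of conditioning set the greedy rules are willing to use. This then gives $\min\{|V_1|, |V_2|\} \geq \min\{|V_1^*|, |V_2^*|\} \geq |V|/(2d_m+2)$.

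With the lucky-pair analysis in hand, the remainder is a standard amplification calculation. The probability that a uniformly chosen separable pair lands in $V_1^* \times V_2^*$ or $V_2^* \times V_1^*$ is at least $2|V_1^*||V_2^*|/|V|^2 \geq 2/(2d_m+2)^2$. Running $k = (2d_m+2)^2$ independent trials and keeping the most balanced outcome, the failure probability is at most
\[
\left(1 - \frac{2}{(2d_m+2)^2}\right)^{(2d_m+2)^2} \leq e^{-2} < \tfrac{1}{2},
\]
via the standard bound $(1 - 1/m)^m \leq e^{-1}$, so the overall success probability exceeds $1/2$ as claimed.

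The main obstacle I expect is the combination of the first two steps: rigorously establishing the existence of a balanced causal cut that satisfies the sufficient $d$-separation condition of Lemma~\ref{lemma:properyofcvs} from only an in-degree bound, and then verifying that the greedy assignment in Algorithm~\ref{algo:split} really does respect such a cut whenever the initial pair is lucky (in particular, that no $w \in V_1^*$ is mistakenly routed to $V_2$ or absorbed into $C$). The remaining probabilistic calculation is routine.
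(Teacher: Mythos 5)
Your overall skeleton matches the paper's proof: exhibit a balanced cut built from a topological order, argue that a randomly chosen initial pair is ``lucky'' with probability on the order of $1/(2d_m+2)^2$, and amplify over $k=(2d_m+2)^2$ restarts via $(1-1/m)^m\le e^{-1}$. Your final amplification calculation is fine and essentially identical to the paper's (the paper uses a per-trial success probability of $1/(2d_m+2)^2$ and gets $1-e^{-1}>1/2$).

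The genuine gap is in your structural existence step. You propose pulling a \emph{contiguous block} of the topological order into $C^*$ and letting the prefix and suffix serve as $V_1^*$ and $V_2^*$. This fails: a bound on in-degree says nothing about out-degree, and an edge can jump over any contiguous block. Concretely, take a DAG whose first node in the order is a parent of every other node (so $d_m=1$): for every contiguous block not containing that node there are edges from the prefix directly to the suffix, and if the block does contain it, the block sits at the very front and the prefix is empty. No counting argument over parent sets rescues the symmetric prefix/suffix form. The paper's construction is deliberately asymmetric: take the $\eta n$ variables starting at position $0.5n$ as $V_1$, put \emph{all parents of that block and the entire suffix} into $C$, and let $V_2$ be the remaining prefix. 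Children of $V_1$ lie in $V_1$ or the suffix (hence in $C$), and parents of $V_1$ are in $C$ by fiat, so no $V_1$--$V_2$ edge exists; the in-degree bound enters only through $|P(V_1)|\le \eta n d_m$, giving $|V_2|\ge n/2-\eta n d_m$, and $\eta=1/(2d_m+2)$ balances both sides at $n/(2d_m+2)$. Note that $C$ is allowed to be huge (about half of $V$) --- the theorem bounds only $\min\{|V_1|,|V_2|\}$ --- whereas your plan implicitly tries to keep $C^*$ small. One can verify the paper's artificial cut also satisfies the sufficient condition of Lemma~\ref{lemma:properyofcvs}: for $u\in V_1$ and $v\in V_2$, conditioning on $P(u)\subseteq C$ $d$-separates $u$ from $v$, which is exactly the property you correctly identified as needed for the greedy CI tests under Assumption~\ref{assum:simple}. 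Your second flagged obstacle --- showing that Algorithm~\ref{algo:split} with a lucky pair actually ends at something at least this balanced --- is treated no more rigorously in the paper, which simply asserts that when $v$ lands in $V_2$ the algorithm ``must converge to a solution better than the artificial configuration''; and your demand that the greedy exactly recover $V_1^*,V_2^*$ is stronger than needed, since any output with $\min\{|V_1|,|V_2|\}$ no smaller would do.
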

\begin{proof} 
		Since the causal structure must be a DAG, there is at least one topological order on the variables. Here topological order of a DAG is a linear order of its vertices such that for every directed edge $v_i\rightarrow v_j$, $v_i$ comes before $v_j$ in the order. Let $V=\{v_1,v_2,\ldots,v_{|V|}\}$ be the topological order, then $v_i$'s parental variables are ahead of $v_i$ in the order. When randomly picking up variable pairs in $V$, i.e., $u$ and $v$ from $V$, we will first show that $u$ and $v$ generate a causal cut with $\min\{|V_1|,|V_2|\}\geq\frac{|V|}{2d_m+2}$ with probability at least $1/\left(2d_m+2\right)^2$.
		
		Without loss of generality, we assume $n=|V|$ and the variable $u$ is behind $v$ in the topological order over $V$. With probability $\eta$, $u$ is one of the variables between $v_{0.5n}$ and $v_{\left(0.5+\eta\right)n}$. Consider all the $\eta n$ variables between $v_{0.5n}$ and $v_{\left(0.5+\eta\right)n}$. We simply put all these variables in $V_1$, and put all parental variables of $V_1$, denoted by $P\left(V_1\right)$, and all variables behind $v_{\left(0.5+\eta\right)n}$ into $C$. The rest of the variables are inserted into $V_2$. In the configuration $\left(C,V_1,V_2\right)$,  $C$ contains all the parental variable of $V_1$ by adding $P\left(V_1\right)$ into $C$, and all the possible children variables of $V_1$ by adding all variables behind $v_{\left(0.5+\eta\right)n}$ into $C$, because the children variable must be ordered behind $V_1$. Thus, there is no direct edge between $V_1$ and $V_2$, and the configuration $\left(C,V_1,V_2\right)$ is a causal cut.
		
		In the above causal cut $\left(C,V_1,V_2\right)$, $|V_1|=\eta n$ and $|V_2|\geq \frac{n}{2}-\eta  n d_m$. The inequality is because $V_2=\{v_i| i\leq 0.5n \}-P(V_1)$ and  $|P(V_1)| \leq \eta  n d_m$. By picking $\eta=\frac{1}{2d_m+2}$, $\min\{|V_1|,|V_2|\}\geq \frac{n}{2d_m+2}$. When $v$ is selected in $V_2$, Algorithm \ref{algo:split} must converge to a solution better than the artificial configuration above. Because in the above analysis $v$ could be place into $V_2$ or $C$, when $v$ is put into $V_2$, the size of $V_2$ is larger than the above expectation. This happens with probability at least $\frac{1}{\left(2d_m+2\right)^2}$ when $\eta=\frac{1}{2d_m+2}$.
		
		By running the randomized causal cut finding algorithm $k=\left(2d_m+2\right)^2$ times, the probability of finding a causal cut with $\min\{|V_1|,|V_2|\}\geq \frac{n}{2d_m+2}$ is larger than $ 1- \left(1 - \frac{1}{\left(2d_m+2\right)^2} \right) ^ {\left(2d_m+2\right)^2}$.  Since $\left(1 - \frac{1}{\left(2d_m+2\right)^2} \right) ^ {\left(2d_m+2\right)^2} \approx e^{-1}$ when $\left(2d_m+2\right)^2$ is sufficiently large, the probability of finding a causal cut with $\min\{|V_1|,|V_2|\}\geq \frac{n}{2d_m+2}$ is at least $1-e^{-1}$, i.e., larger than 1/2.
\end{proof}

The last theorem implies that the causal cut finding algorithm is effective on reducing the scale of the subproblems. Another implication is on the selection of the parameter $\theta$. To guarantee there is a reduction on problem size, the parameter $\theta$ should be no smaller than $2d_m+2$, since such $\theta$ ensuring that $\frac{\theta}{2d_m+2}\geq 1$.

\subsection{Recall and Precision on Result Causal Edges}

The accuracy of the causation discovery is measured based on the recall and precision on the result causal edges, i.e., the percentage of accurate causal edges and the percentage of causal edges returned. In this section, we show that SADA always finds fully accurate results in terms of recall and precision, if the invoked basic causal solver and conditional independence tests are both reliable. 

\begin{assumption}
A basic causal solver A is reliable, if A always outputs accurate causal edges on any variable set V even with latent confounders.
\end{assumption}

\begin{theorem} \label{theorem:correctcomplete}
Assume $D$ is a set of samples generated from the causal structure $G$ over the variable set $V$. If the basic causal solver $A$ and conditional independence tests used in SADA are both reliable, SADA always finds the true causal structure $G$.
\end{theorem}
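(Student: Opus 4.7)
The plan is to prove this by strong induction on $|V|$, leveraging the recursive structure of Algorithm~\ref{algo:framework}. The base case is $|V|\leq\theta$: SADA invokes the basic causal solver $A$ directly on the full sample, and by the reliability of $A$ the returned graph equals $G$ exactly, even if any confounding effects from outside $V$ were present (vacuously none at the top level, but the assumption is needed for the sub-calls).

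For the inductive step with $|V|>\theta$, I would first verify that Algorithm~\ref{algo:split} returns a bona fide causal cut $(C,V_1,V_2)$ with $V_1$ and $V_2$ both nonempty. Under reliable CI tests, every separation check in Algorithm~\ref{algo:split} is equivalent to a d-separation statement in $G$, so the conditions of Lemma~\ref{lemma:properyofcvs} are maintained throughout the assignment and move phases. Lemma~\ref{lemma:properyofcvs} then certifies that the output is a causal cut. Since $V_1$ and $V_2$ are seeded from two distinct variables and never emptied, $|V_i\cup C|<|V|$ for $i=1,2$, so the inductive hypothesis applies to both recursive calls and yields $G_1$ equal to the subgraph of $G$ induced on $V_1\cup C$ and $G_2$ equal to the subgraph induced on $V_2\cup C$. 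The causal-cut property forbids any edge of $G$ from straddling $V_1$ and $V_2$, so every edge of $G$ has both endpoints in $V_1\cup C$ or both in $V_2\cup C$; combined with $G_i\subseteq G$, this gives $G_1\cup G_2=G$ as a set of edges.

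The remaining step is to show that Algorithm~\ref{algo:merge} maps $(G_1,G_2)$ back to $G$ unchanged. The basic-merging stage writes $G:=G_1\cup G_2=G$. The conflict-removal stage searches for directed cycles; since $G$ is a DAG, it finds none, so it removes no edge. The redundancy-removal stage deletes a direct edge $v_1\to v_2$ whenever a directed path $v_1\to\cdots\to v_2$ coexists and some $V'\subset\mathrm{Path}(v_1\to v_2)$ satisfies $v_1\perp v_2\mid V'$. Here I would invoke the faithfulness assumption: a true direct edge in $G$ cannot be d-separated by any conditioning set disjoint from its endpoints, and the redundancy test uses precisely such sets (intermediate variables of a directed path). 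Under reliable CI tests this means every redundancy test on a genuine edge fails, so no edge of $G$ is pruned. Hence SADA returns exactly $G$.

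The main obstacle is the redundancy-removal analysis, since it is the only merge step that could in principle delete a true edge: I would have to argue carefully that the conditioning sets $V'$ considered by the test never include $v_1$ or $v_2$ themselves, and then combine faithfulness with the reliability of the CI oracle to conclude that the test always correctly reports dependence on a true direct edge. The rest of the argument is essentially structural (induction plus the edge-partition property of a causal cut) and should go through without difficulty.
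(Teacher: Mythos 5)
There is a genuine gap in your inductive step, and it sits exactly where the paper's proof does its real work. You assert that the recursive calls return $G_1$ and $G_2$ equal to the subgraphs of $G$ induced on $V_1\cup C$ and $V_2\cup C$, hence $G_i\subseteq G$ and $G_1\cup G_2=G$, which makes both conflict removal and redundancy removal vacuous. This is false: the samples fed to the subproblem on $V_1\cup C$ are still generated by the \emph{full} structure $G$, so the variables in $V_2$ act as latent mediators. If $G$ contains a directed path $v_1\rightarrow\cdots\rightarrow v_2$ with $v_1,v_2\in C$ and all intermediate nodes in $V_2$, then relative to the variable set $V_1\cup C$ the variable $v_1$ \emph{is} a direct cause of $v_2$, and a solver that ``outputs accurate causal edges even with latent confounders'' legitimately returns $v_1\rightarrow v_2\in G_1$ although no such edge exists in $G$. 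This is precisely the scenario of Figure~\ref{fig:merge_redundancy}, and it is the entire reason Algorithm~\ref{algo:merge} has a redundancy-removal stage. Consequently the correctness half of the theorem --- that the output contains no edge outside $G$ --- is exactly what your argument leaves unproven: you would need to show that every such marginalization-induced edge is caught and deleted. The paper's proof handles this case explicitly: for $v_1\rightarrow v_2\in G'$ that is correct for its subproblem but absent from $G$, there exists $V'$ with $v_1\bot v_2\mid V'$, faithfulness converts this into a directed path $v_1\rightarrow\cdots\rightarrow v_2$ containing $V'$ as intermediate nodes, and the redundancy test then removes the direct edge. A related flaw is the induction itself: your inductive hypothesis is the theorem statement, which assumes the data is generated from a causal structure \emph{over the subproblem's own variable set}; the restricted samples do not satisfy this, so you would have to induct on a strengthened statement about marginals of DAG models, at which point ``SADA finds the true structure of the subproblem'' no longer means ``finds the induced subgraph of $G$.''

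The completeness half of your argument is essentially sound and matches the paper's: reliable CI tests prevent a true edge from straddling $V_1$ and $V_2$ (via Lemma~\ref{lemma:properyofcvs} and Algorithm~\ref{algo:split}), the reliable solver recovers it inside a subproblem, and your faithfulness argument that the redundancy test can never fire on a genuine direct edge (since no conditioning set disjoint from the endpoints d-separates them) is correct and in fact more careful than the paper's corresponding remark. Even your conclusion that conflict removal deletes nothing is true, but for a different reason than you give: every merged edge corresponds to a directed path of $G$ and so respects its topological order, whence no union of such edges forms a cycle --- not because the merged graph already equals $G$. As written, though, the correctness direction rests on the false identity $G_1\cup G_2=G$, so the proof does not go through without repairing the redundancy-removal analysis along the lines of the paper's argument.
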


\begin{proof}  Assume $G'$ is the causal structure discovered by SADA. We only need to prove the correctness and completeness of $G'$. The correctness and completeness are equivalent to $\forall v_1\rightarrow v_2 \in G'$, $v_1 \rightarrow v_2 \in G$, and $\forall v_1\rightarrow v_2 \in G$, $v_1 \rightarrow v_2 \in G'$, respectively. The details of the proof are given as follows:
	
	\textbf{Completeness:} Assume $v_1\rightarrow v_2 \in G$, firstly, according to the causal cut finding step,  both $v_1$ and $v_2$ must be in one subproblem, $V_1\cup C$ or $V_2 \cup C$, but not acrose the two subproblems. Otherwise, $v_1$ and $v_2$ is conditional independent of each other given some subset of $C$, conflicts with the condition $v_1 \rightarrow v_2 \in G$ and the assumption that the conditional independence tests are reliable. Secondly, according to the following two conditions: '$v_1$ and $v_2$ are in the same subproblem' and 'basic causal solver is reliable', $v_1 \rightarrow v_2 \in G'$ will be discovered in one of the subproblems. Finally, the edge $v_1 \rightarrow v_2$ will not be removed in the merging step. If the edge is removed by either conflict or redundancy reason, it will conflict with the condition $v_1\rightarrow v_2 \in G$ and the assumption that the condition independence test is reliable. Thus, $v_1\rightarrow v_2$ must be contained in the result of SADA, in anther word, $v_1\rightarrow v_2 \in G'$.
	
	\textbf{Correctness:} Assume $v_1\rightarrow v_2 \in G'$,  firstly we will show $v_1\rightarrow v_2$ is the correct result of the subproblem. According to the framework of SADA, $v_1$ and $v_2$ must be discovered in one of the subproblem $V_1 \cup C$ and $V_2 \cup C$.  Without loss of generality, assume $v_1\rightarrow v_2$ is discovered in the subproblem $V_1 \cup C$ by the basic causal solver. According to the condition that the basic causal solver is reliable, $v_1 \rightarrow v_2$ must be the correct result of the subproblem $V_1 \cup C$. Secondly, we will show $v_1\rightarrow v_2 \in G$. If $v_1\rightarrow v_2$ is the correct result of $V_1 \cup C$ but not contained in $G$, then there must exist a variable set $V' \subset V$ satisfies $v_1 \bot v_2 | V'$. Thus, there must be a path $v_1 \rightarrow  \cdots \rightarrow v_2$ which contains $V'$ as intermediate nodes. If such path exists, according to the merging step, $v_1\rightarrow v_2$ will be removed from the result set $G'$, and conflicts with the condition that $v_1\rightarrow v_2 \in G'$. Thus, $v_1\rightarrow v_2 \in G$.
\end{proof}

Basically, the theorem above claims that the recall and precision on the causal edges returned by SADA are always satisfiable. However, as we emphasized at the beginning of the section, the assumption on reliable conditional independence tests is impractical, since randomness and noises always exist in the samples. In next section, we relax the assumption and show that SADA remains effective in a class of much more general settings.

\section{Analysis under Vulnerable Conditional Independence Test}\label{sec:theory2}

\begin{figure}[ht]
\centering
\includegraphics[width=\columnwidth]{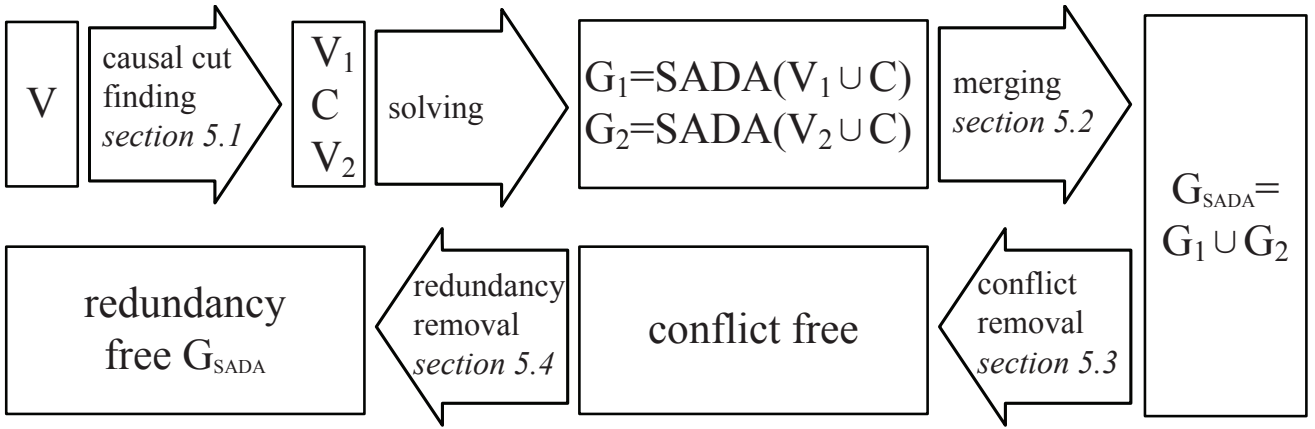}
\caption{Workflow of SADA with single partitioning.} \label{fig:sada_framework}
\end{figure}

In this section, we analyze the performance of SADA under more general assumptions, taking errors incurred by causal cut finding, merging, redundancy removal and conflict removal into consideration. To accomplish the goals of the analysis, we investigate the impact of each step in the algorithm on the recall and precision of the results one at a time. As is shown in Figure \ref{fig:sada_framework}, there are five key steps taken in SADA, including (1) finding causal cut $\left(C, V_1, V_2\right)$ over variable set $V$; (2) solving two subproblem $V_1\cup C$ and $V_2\cup C$ using basic causal solvers independently; (3) merging two sub-solutions $G_1$ and $G_2$; (4) removing conflict edges after the merging; and (5) detecting and removing redundancy edges.

To improve the readability of the paper, we summarize all the notations used in the rest of the section in Table \ref{tab:notation}. Basically, $n$s indicate sizes of different subgraphs, $e$s indicate numbers of actual causal edges in the subgraphs, $f$s denote the number of non-causal ordered pairs, $G$s represent resulting causal structures of the problems, $R$s and $P$s are recalls and precisions on the causal edges in the results, and $r$s are the probabilities of returning a particular false causal edge in the results. We will also utilize the following equations, $e+f=n^2-n$, $\frac{eR}{eR+fr}=P$, $e_c+f_c=\left(n_c\right)^2-n_c$, which are trivial extensions of the definitions.

\begin{table*}[t]
\center
\caption{Table of Notations}
\label{tab:notation}
\begin{tabular}{|c|p{4.5in}|}
\hline Notations & Description  \\
\hline $n, n_1, n_2, n_c$ & \# of variables in $V$, $V_1\cup C$, $V_2\cup C$ and $C$\\
\hline $e, e_1, e_2, e_c$ & \# of causal edges in $V\!\times\! V$, $\left(V_1\!\cup\! C\right)\! \times\! \left(V_1\!\cup\!C\right)\! - \!C \!\times\! C $, $\left(V_2\!\cup\!C\right) \!\times \!\left(V_2\!\cup\!C\right) \!- \!C \!\times\! C $ and $C \!\times \!C$\\
\hline $f, f_1, f_2, f_c$ & \# of non-causal ordered pairs in $V\!\times\! V$, $\left(V_1\!\cup\! C\right)\! \times\! \left(V_1\!\cup\!C\right)\! - \!C \!\times\! C $, $\left(V_2\!\cup\!C\right) \!\times \!\left(V_2\!\cup\!C\right) \!- \!C \!\times\! C $ and $C \!\times \!C$\\
\hline $d$ & average in-degree of the causal structure\\
\hline $G, G_1, G_2$ & solution of $V$, $V_1\cup C$ and $V_2\cup C$\\
\hline $R, R_1, R_2$ & recall of $G$, $G_1$ and $G_2$\\
\hline $P, P_1, P_2$ & precision of $G$, $G_1$ and $G_2$\\
\hline $P_m, P_{co}, P_{re}$ & precision after merging, conflict removal and redundancy removal\\
\hline $r, r_1, r_2$ & falsely discovered probability of non-cause edges in $G$, $G_1$ and $G_2$\\
\hline $e_{m},f_{m}$ & \# of causal edges and non-causal ordered pairs discovered in basic merging step\\
\hline $e_{ce},e_{co},e_{re}$ & \# of causal edges falsely removed in causal cut finding, conflict and redundancy removal\\
\hline $\Lambda(V_1,V_2)$ & \# the event that there is no causal edge across $V_1$ and $V_2$ detected by the conditional independence tests\\
\hline $\delta$ & the largest positive constant such that $R_1 \geq R+\delta$ and $R_2 \geq R+\delta$ always hold\\
\hline $\gamma$ & the largest positive constant such that $r \leq r_1-\gamma$ and $r \leq r_1-\gamma$ always hold\\
\hline $\alpha$ & the error probability of conditional independence tests on returning $\left(v_1,v_2,V\right)$ that $v_1\not\bot v_2|V$ \\
\hline $\beta$  & the error probability of conditional independence tests on \emph{not} returning $\left(v_1,v_2,V\right)$ that $v_1\bot v_2|V$ \\
\hline $\varepsilon$ & the probability of a falsely discovered causal edge has higher significance than a true causal edge\\
\hline
\end{tabular}
\normalsize
\end{table*}


The analysis is derived based on the following general assumptions, which are commonly satisfied in real world settings. They are motivated by our observations on SADA in empirical evaluations. To begin with, the first assumption addresses the property of the causal structure.

\begin{assumption}\label{assum:edgedistr}
In the causal structure, the edges are uniformly distributed on the nodes. 
\end{assumption}

The above assumption ensures the local causal structures are independent of each other, which is reasonable in most of the application scenarios. Under this assumption, each edge $v_i \rightarrow v_j$ appears with probability $c/\left(n-1\right)$ when the in-degree of the variable $v_j$ is $c$.

The following assumption attempts to build the connection between recall/precision of the origin problem and recall/precision of the sub-problems.

\begin{assumption}\label{assum:randp}
There exist global constants $\delta>0$ and $\gamma>0$, such that $R_1 \geq R+\delta$, $R_2 \geq R+\delta$, $r_1 \leq r-\gamma$ and $r_2 \leq r-\gamma$.
\end{assumption}

The above assumption is based on the observation that the scale of $V_1 \cup C$ and $V_2 \cup C$ is usually significantly smaller than that of $V$. Given the fixed sample set used for training, it is common to gain accuracy improvement when the basic causal solver is run on problems of smaller scale. This assumption is also empirically validated in the experiments with results available in Figure \ref{fig:sim_split}.

Next assumption is used to model the significance of the discovered edges, which is crucial to the analysis of conflict and redundancy removal.

\begin{assumption}\label{assum:sig}
Given a true discovered edge $v_1 \rightarrow v_2$ and a falsely discovered edge $v_3 \rightarrow v_4$, there exists a global constant $\varepsilon >0$, such that $\Pr\left(sig\left(v_1 \rightarrow v_2\right) > sig\left(v_3 \rightarrow v_4\right)\right)>1-\varepsilon$.
\end{assumption}

As defined in the partial result merging algorithm (in section \ref{sec:merge}), the significance measure is the $p$-value of the edge's reliability. It is thus reasonable that the correctly discovered causal edges are more likely to get higher significance than the falsely discovered edges.


Finally, the last assumption regards the reliability of the conditional independence tests, on two types of errors in the results of conditional independence tests.

\begin{assumption}\label{assum:cit}
In the conditional independence tests, the probability that the independent relation is correctly identified as independent is at least $1-\alpha$, and
the probability that the dependent relation is falsely identified as independent is at most $\beta$.
\end{assumption}

In practice, the error bounds $\alpha$ and $\beta$ could be tuned by the users by specifying appropriate confidence interval. In the rest of the paper, without other specification, we use $0.05$ as the default values for $\alpha$ and $\beta$.


\subsection{Effects of Causal Cut}

Causal cut benefits SADA algorithm by improving the recall and precision of the basic causal solver when applied on sub-problems with much smaller scale. The side effect of the causal cut is the additional error overhead caused by undetected causal variable pairs which are separated in the causal cut. In this part of the section, we aim to give an analysis on the expectation of the causal cutting error in the partitioning step of SADA. To simplify the analysis, instead of using the original randomized causal cut finding algorithm, we uniformly divides the variables into $V_1$, $C$ and $V_2$, given the specific sizes $n_1$, $n_2$, $n_c$. Note that such partitioning is unaware of the actual causal structure. The causal cutting error incurred by SADA algorithm is thus definitely smaller than the estimation.

We assume that $V_1$, $C$ and $V_2$ are random variable sets output by the uniform assignment. Based on the assumptions, it is equivalent to assign the causal edges into the graph, from a null causal structure on the fixed partitioning result $V_1$, $C$ and $V_2$. We thus derive all the probabilities by simulating the random edge assignment process as following.

Let $\Psi$ denote the set of all causal structures over the current variable set $V$, and $\Psi_i$ denote a subset of $\Psi$ with exactly $i$ edges between $V_1$ and $V_2$. Given Assumption \ref{assum:edgedistr}, the probability of having an actual structure $\psi\in\Psi_i$ could be evaluated using Equation (\ref{eqn:cut1}), as the edges are independently assigned to the variables in $V$ and there are $e$ actual causal edges and $f$ non-causal ordered pairs,

\begin{equation}\label{eqn:cut1}
\Pr\left(\psi\in\Psi_i\right) = {i \choose n_1n_2} \left(\frac{e}{f+e}\right)^i \left(\frac{f}{f+e}\right)^{n_1n_2-i}
\end{equation}

Intuitively, in the equation, $\frac{e}{f+e}$ denotes the probability that there is a direct edge between a particular pair of variables. Similarly, $\frac{f}{f+e}$ denotes the probability that there is no edge between a particular pair of variables.

Moreover, we can further evaluate the probability of generating a valid partitioning in terms of the algorithmic condition in SADA. Particularly, SADA does not accept a partitioning if it finds a potential causal edge between any $v_1\in V_1$ and $v_2\in V_2$. We thus derive Equation (\ref{eqn:cut2}) below to evaluate the joint probability of $\psi \in \Psi_i$ and $\Lambda(V_1,V_2)$. $\Lambda(V_1,V_2)$ refers to the event that there is no causal edge across $V_1$ and $V_2$ detected by the conditional independence tests:

\begin{equation}\label{eqn:cut2}
\Pr\left(\psi \in \Psi_i, \Lambda(V_1,V_2)\right) = {i \choose n_1n_2} \left(\frac{e}{f+e}\right)^i \left(\frac{f}{f+e}\right)^{n_1n_2-i}\beta^i\left(1-\alpha\right)^{n_1n_2-i}
\end{equation}


Given Equation (\ref{eqn:cut1}) and Equation (\ref{eqn:cut2}), we apply Bayesian rule to estimate the probability of generating a partition $\left( C, V_1, V_2\right)$ by the causal cut finding algorithm, under the condition of $\Lambda(V_1,V_2)$, i.e.,

\begin{eqnarray}\label{eqn:cut3}
\Pr\left(\psi \in \Psi_i|\Lambda(V_1,V_2)\right) & = & \frac{P\left(\psi \in \Psi_i, \Lambda(V_1,V_2)\right)}{P\left(\Lambda(V_1,V_2)\right)} \nonumber \\
& = & \frac{{i \choose n_1n_2} \left(\frac{e}{f+e}\right)^i \left(\frac{f}{f+e}\right)^{n_1n_2-i}\beta^i \left(1-\alpha\right)^{n_1n_2-i}} {\sum_{j=0}^{n_1n_2} {j \choose n_1n_2} \left(\frac{e}{f+e}\right)^j \left(\frac{f}{f+e}\right)^{n_1n_2-j}\beta^j \left(1-\alpha\right)^{n_1n_2-j}}\nonumber \\
& = & \frac{{i \choose n_1n_2} \left(\frac{e  \beta}{f \left(1-\alpha\right)}\right)^i} {\sum_{j=0}^{n_1n_2} {j \choose n_1n_2} \left(\frac{e\beta}{f\left(1-\alpha\right)}\right)^j}
\end{eqnarray}

The expectation of the error $e_{ce}$ caused by the causal cut finding, i.e., the number of undetected edges across $V_1$ and $V_2$, could be calculated as

\begin{eqnarray}\label{eqn:cut_bound}
&   & \mbox{Exp}\left(e_{ce}\right) \nonumber\\
& = & \sum_{i=0}^{n_1n_2} i\Pr\left(\psi \in \Psi_i|\Lambda(V_1,V_2)\right)\nonumber\\
& = & \frac{\sum_{i=0}^{n_1n_2} i  {i \choose n_1n_2} \left(\frac{e\beta}{f\left(1-\alpha\right)}\right)^i} {\sum_{j=0}^{n_1n_2} {j \choose n_1n_2} \left(\frac{e\beta}{f\left(1-\alpha\right)}\right)^j} \nonumber\\
& = & \frac{\phi\sum_{i=0}^{n_1n_2}{i \choose n_1n_2}  \left(\frac{e\beta}{f\left(1-\alpha\right)}\right)^i+\sum_{i=0}^{n_1n_2} \left(i-\phi\right)  {i \choose n_1n_2} \left(\frac{e\beta}{f\left(1-\alpha\right)}\right)^i} {\sum_{j=0}^{n_1n_2} {j \choose n_1n_2} \left(\frac{e\beta}{f\left(1-\alpha\right)}\right)^j}\nonumber\\
& = & \frac{\phi\sum_{i=0}^{n_1n_2}{i \choose n_1n_2} \left(\frac{e\beta}{f\left(1-\alpha\right)}\right)^i+\sum_{i=0}^{n_1n_2} \left(i-\phi\right) {i-\phi \choose n_1n_2} \left(\frac{e\beta}{f\left(1-\alpha\right)}\right)^{i-\phi}\frac{{i \choose n_1n_2}}{C_{n_1n_2}^{\left(i-\phi\right)}}\left(\frac{e\beta}{f\left(1-\alpha\right)}\right)^\phi} {\sum_{j=0}^{n_1n_2} {j \choose n_1n_2} \left(\frac{e\beta}{f\left(1-\alpha\right)}\right)^j}\nonumber \\
&\leq & \phi+1\nonumber\\
&\leq & \left\lceil\frac{n^2e\beta}{4f\left(1-\alpha\right)}\right\rceil+1
\end{eqnarray}
in which $\phi$ is the smallest positive integer satisfying the condition $ \frac{{i \choose n_1n_2}}{{i-\phi \choose n_1n_2}}\left(\frac{e\beta}{f\left(1-\alpha\right)}\right)^\phi \leq 1$ for any integer $i\in [\phi+1, n_1n_2]$. Because $\frac{{i \choose n_1n_2}}{{i-\phi \choose n_1n_2}}\leq \left(\frac{n_1n_2}{\phi}\right)^\phi$ holds for $ \forall i\in [\phi+1, n]$, $\phi\leq\lceil\frac{n^2e\beta}{4f\left(1-\alpha\right)}\rceil+1$ and $E\left(e_{ce}\right)$ is no larger than $\phi+1$, correspondingly.

The causal cutting error is usually small, since $\frac{e\beta}{f\left(1-\alpha\right)}$ is not large in most cases. Under a typical setting with variable number $n=100$, in-degree $d=1.25$, $\alpha=0.05$ and  $\beta=0.05$,
the expectation of the causal cutting error is no larger than $\left\lceil\frac{n^2e\beta}{4f\left(1-\alpha\right)}\right\rceil+1=3$, which is much smaller than the expected number of causal edges at 125.

\subsection{Effects of Result Merging}

In the merging step of SADA, the algorithm simply includes all the resulting edges from the solutions to the subproblems, i.e., $G_1$ and $G_2$. The key to our analysis in this part is to understand the recall and precision on the causal edges within the variable set $C$, because they are calculated in both subproblems on $V_1\cup C$ and $V_2\cup C$. To make the analysis possible, we try to evaluate the accuracy on these edges in $C\times C$ by estimating the number of true causal edges and false causal edges returned in the merging step.


Since the recalls of $G_1$ and $G_2$ are $R_1$ and $R_2$ respectively, and $G_1$ and $G_2$ are solved independently, the number of actual causal edges identified in $C$ is $e_c(1-\left(1-R_1\right)\left(1-R_2\right))=e_c(R_1+R_2-R_1R_2)$. Similarly, the number of falsely discovered edges $C$ is $f_c(1-\left(1-r_1\right)\left(1-r_2\right))=f_c(r_1+r_2-r_1r_2)$.

Therefore, we could derive the number of true causal edges and false causal edges by the following two equations:

\begin{equation}\label{eqn:merge_true}
e_{m}=e_1R_1+e_2R_2+e_c\left(R_1+R_2-R_1R_2\right)
\end{equation}

and

\begin{equation}\label{eqn:merge_false}
f_{m}=f_1r_1+f_2r_2+f_c\left(r_1+r_2-r_1r_2\right)
\end{equation}

Based on Equation (\ref{eqn:merge_true}), we can further derive the lower bound on the number of returned causal edges. Note that the third inequality is due to Assumption \ref{assum:randp}, the fourth inequality is based on the fact $e_1+e_2+e_c+e_{ce}=e$, and the last inequality applies the rule $R+\delta \leq 1$.

\begin{eqnarray}
e_{m} &=& e_1R_1+e_2R_2+e_c\left(R_1+R_2-R_1R_2\right)\nonumber\\
      &\geq& e_1R_1+e_2R_2+e_cR_1 \nonumber\\
      &\geq& \left(e_1+e_2+e_c\right)R+\left(e_1+e_2+e_c\right)\delta+e_c\left(R+\delta\right)\nonumber\\
      &\geq& \left(e-e_{ce}\right)R+\left(e-e_{ce}\right)\delta\nonumber\\
      &\geq& eR+e\delta-e_{ce}
\end{eqnarray}

Thus, the lower bound on the expectation $\mbox{Exp}\left(e_{m}\right)$ could be derived as follows, in which the inequality is based on the upper bound of  $\mbox{Exp}\left(e_{ce}\right)$ available in Equation  (\ref{eqn:cut_bound}).

\begin{equation}\label{eqn:merge_bound}
\begin{split}
\mbox{Exp}\left(e_{m}\right) &\geq eR+e\delta-\mbox{Exp}\left(e_{ce}\right)\\
          &\geq eR+e\delta-\left\lceil\frac{n^2e\beta}{4f\left(1-\alpha\right)}\right\rceil-1
\end{split}
\end{equation}

The following lemma provides a sufficient condition to generate higher precision on the causal edges in $C$ than that of the basic causal solver directly applied on the original problem.

\begin{lemma}\label{lemma:merge_precision}
If $\delta > \frac{Pf_c\left(r-r^2\right)}{\left(1-P\right)\left(e_1+e_2+e_c\right)}$ or $\gamma > \frac{f_cr}{f_1+f_2+2f_c}$, $P_{m}\geq P$ holds.
\end{lemma}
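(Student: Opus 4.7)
The plan is to convert the inequality $P_m \geq P$ into the algebraically more tractable form $e_m(1-P) \geq f_m P$, which is equivalent since $P_m = e_m/(e_m+f_m)$, $P = eR/(eR+fr)$, and all four quantities are positive. I will then use Assumption \ref{assum:randp} to replace $R_1, R_2$ with the lower bound $R+\delta$ and $r_1, r_2$ with the upper bound $r-\gamma$ inside Equations (\ref{eqn:merge_true}) and (\ref{eqn:merge_false}). The resulting expressions are affine in $e_1,e_2,e_c$ and $f_1,f_2,f_c$, plus two quadratic overlap terms $e_c(R_1+R_2-R_1R_2)$ and $f_c(r_1+r_2-r_1r_2)$ that encode the "double counting" of edges inside the causal cut set $C$.

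The heart of the argument is a baseline cancellation that uses Assumption \ref{assum:edgedistr}. Since edges are uniformly distributed over the variables, the ratio of causal to non-causal ordered pairs is the same across the three subsets $(V_1\cup C)\times(V_1\cup C)-C\times C$, $(V_2\cup C)\times(V_2\cup C)-C\times C$, and $C\times C$, so that $(e_1+e_2+e_c)/(f_1+f_2+f_c) = e/f$. Combined with $P = eR/(eR+fr)$, this yields the key identity $(e_1+e_2+e_c)R(1-P) = (f_1+f_2+f_c)rP$, which makes the "non-overlapping part" of $e_m(1-P)-f_m P$ cancel exactly; what remains are only contributions from the $C\times C$ overlap plus the $\delta$ and $\gamma$ savings.

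For the first sufficient condition, I will retain only the $\delta$ savings and set $r_1,r_2 \le r$ in the bound on $f_m$. Using the weak lower bound $e_m \geq (e_1+e_2+e_c)(R+\delta)$ (discarding the non-negative overlap bonus $e_c(R+\delta)(1-R-\delta)\geq 0$) together with $f_m \leq (f_1+f_2+f_c)r + f_c r(1-r)$, the difference $e_m(1-P)-f_m P$ collapses, via the cancellation identity, to $(e_1+e_2+e_c)\delta(1-P) - P f_c(r-r^2)$, whose non-negativity is exactly the first hypothesis. For the second sufficient condition, I will retain only the $\gamma$ savings (setting $R_1,R_2 \geq R$), expand $f_c(2(r-\gamma)-(r-\gamma)^2)$ to regroup $f_m \leq (f_1+f_2+2f_c)(r-\gamma) - f_c(r-\gamma)^2$, and again apply the cancellation identity; after discarding the non-negative residuals $e_c R(1-R)(1-P)$ and $f_c(r-\gamma)^2 P$, the inequality reduces to $(f_1+f_2+2f_c)\gamma \geq f_c r$, matching the second hypothesis.

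The main obstacle is keeping the bookkeeping of the $C\times C$ overlap straight: because $e_c$ enters $e_m$ through $e_c(1-(1-R_1)(1-R_2))$ rather than linearly, one must carefully isolate the double-counting residuals like $P f_c r(1-r)$ and verify that each of the two stated conditions is \emph{individually} sufficient to dominate them. Apart from this, the remaining steps amount only to invoking Assumption \ref{assum:edgedistr} to reduce the sub-problem totals to the full-problem totals, and then dropping the non-negative terms that only strengthen the conclusion.
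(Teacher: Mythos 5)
Your proposal is correct and follows essentially the same route as the paper's proof: it treats the two hypotheses separately, lower-bounds $e_m$ and upper-bounds $f_m$ via Assumption~\ref{assum:randp} and the monotonicity of $x+y-xy$ on $[0,1]^2$, discards the non-negative overlap residuals, and closes with the baseline identity $(e_1+e_2+e_c)R(1-P)=(f_1+f_2+f_c)rP$, which the paper invokes directly and you additionally ground in Assumption~\ref{assum:edgedistr}. The only differences are presentational --- you work with the cross-multiplied difference $e_m(1-P)-f_mP$ rather than the paper's fraction chain, and your direct bound $f_c(r_1+r_2-r_1r_2)\le f_c(2r-r^2)$ in fact sidesteps a non-monotone intermediate step in the paper's displayed inequalities (where $2f_cr$ is later shrunk back to $f_c(2r-r^2)$ while the numerator is simultaneously decreased).
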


\begin{proof}
	Basically, Equation (\ref{eqn:merge_bound}) implies that the precision is higher, i.e., $P_{m}\geq P$, if $\delta > \frac{Pf_c\left(r-r^2\right)}{\left(1-P\right)\left(e_1+e_2+e_c\right)}$. When the condition is satisfied, we have
	
	\begin{eqnarray}\label{eqn:merge_precision1}\small
	P_{m} &=& \frac{e_m}{e_m+f_m}\nonumber\\
	&=& \frac{e_1R_1+e_2R_2+e_c\left(R_1+R_2-R_1R_2\right)} {e_1R_1+e_2R_2+e_c\left(R_1+R_2-R_1R_2\right)+ f_1r_1+f_2r_2+f_c\left(r_1+r_2-r_1r_2\right)}\nonumber\\
	&\geq& \frac{e_1R_1+e_2R_2+e_c\left(R_1+R_2-R_1R_2\right)} {e_1R_1+e_2R_2+e_c\left(R_1+R_2-R_1R_2\right)+ f_1r+f_2r+2f_cr}\nonumber\\
	&\geq& \frac{\left(e_1+e_2+e_c\right)R+\left(e_1+e_2+e_c\right)\delta+e_c\left(R+\delta-\left(R+\delta\right)^2\right)} {\left(e_1\!+\!e_2\!+\!e_c\right)R+\left(f_1\!+\!f_2\!+\!f_c\right)r+\left(e_1\!+\!e_2\!+\!e_c\right)\delta+f_c\left(r\!-\!r^2\right)\!+\!e_c\left(R\!+\!\delta\!-\!\left(R+\delta\right)^2\right)}\nonumber\\
	&\geq& \frac{\left(e_1+e_2+e_c\right)R+\left(e_1+e_2+e_c\right)\delta} {\left(e_1+e_2+e_c\right)R+\left(f_1+f_2+f_c\right)r+\left(e_1+e_2+e_c\right)\delta+f_c\left(r-r^2\right)}\nonumber\\
	&\geq& P\nonumber
	\end{eqnarray}
	
	The first equality is based on the definition of precision. The first inequality is because of the facts $r_1\leq r$ and $r_2\leq r$ given in Assumption \ref{assum:randp}. The second inequality is derived based on $R_1\geq R+\delta$ and $R_2\geq R+\delta$ given in Assumption \ref{assum:randp}. And the last inequality is due to $\frac{\left(e_1+e_2+e_c\right)R} {\left(e_1+e_2+e_c\right)R+\left(f_1+f_2+f_c\right)r}=P$ and $\delta \geq \frac{pf_c\left(r-r^2\right)}{\left(1-P\right)\left(e_1+e_2+e_c\right)}$.
	
	Similarly, when $\gamma \geq \frac{f_cr}{f_1+f_2+2f_c}$, we can derive the bounds on $P_{m}$ by another way as:
	\begin{eqnarray}\label{eqn:merge_precision2}
	& & P_{m}\nonumber\\
	&=& \frac{e_1R_1+e_2R_2+e_c\left(R_1+R_2-R_1R_2\right)} {e_1R_1+e_2R_2+e_c\left(R_1+R_2-R_1R_2\right)+f_1r_1+f_2r_2+f_c\left(r_1+r_2-r_1r_2\right)}\nonumber\\
	&\geq& \frac{e_1R+e_2R+e_c\left(R+R-R^2\right)} {e_1R+e_2R+e_c\left(R+R-R^2\right)+f_1r_1+f_2r_2+f_c\left(r_1+r_2-r_1r_2\right)}\nonumber\\
	&\geq& \frac{\left(e_1+e_2+e_c\right)R} {\left(e_1+e_2+e_c\right)R+\left(f_1+f_2+f_c\right)r+f_cr-\left(f_1+f_2+2f_c\right)\gamma}\nonumber\\
	&\geq& P\nonumber
	\end{eqnarray}
	
	The first equality is based on the definition of precision. The first inequality is because of $R_1\geq R$ and $R_2\geq R$, given in Assumption \ref{assum:randp}. The second inequality is because of $r_1\leq r-\gamma$ and $r_2\geq r-\gamma$, given in Assumption \ref{assum:randp}.
	The last inequality is because of $\frac{\left(e_1+e_2+e_c\right)R} {\left(e_1+e_2+e_c\right)R+\left(f_1+f_2+f_c\right)r}=P$ and $\gamma \geq \frac{f_cr}{f_1+f_2+2f_c}$. This completes the proof of the lemma.
\end{proof}

\subsection{Effects of Conflict Removal}

The step of conflict removal is expected to eliminate the false causal edges returned by the merging step, under the potential risk of falsely removing actual causal edges. As is shown in Algorithm \ref{algo:merge}, the selection of the removal edges heavily depends on the significance measure employed on candidate edges. In this part of the section, we analyze how the randomness on the significance measure affects the accuracy of results after conflict removal.

Given an edge $v_i \rightarrow v_j$, there are two types of conflicts to address, including (1) conflict between two edges, e.g. $v_i \rightarrow v_j$ against  $v_i \leftarrow v_j$; and (2) conflict between an edge and a path, e.g. $v_i \rightarrow v_j$ against $v_i \ldots \leftarrow \ldots v_j$.

In the first type of conflict, the variable pair $v_i$ and $v_j$ exist on both $G_1$ and $G_2$. Thus, the number of conflict edge pairs between $v_i \rightarrow v_j$ and $v_i \leftarrow v_j$ can be estimated as $e_c\left(r_2R_1+R_2r_1\right)+f_cr_1r_2$. In the estimation, $e_cr_2R_1$ denotes the number of actual causal edges correctly discovered in $G_1$ with a corresponding reversed edge included in $G_2$.  Similarly, $e_cr_1R_2$ denotes the number of actual causal edges correctly discovered in $G_2$, while a reversed one is available in $G_1$ at the same time. Finally, $f_cr_1r_2$ is the number of edge pairs, which are both false and reversed to each other. Based on Assumption \ref{assum:sig}, the expected number of actual edges removed by the current step is $\varepsilon e_c\left(r_2R_1+R_2r_1\right)$, by only considering pairs with at least one actual causal edge. 

The second type of conflict is in the form $v_i \rightarrow v_j$ and $v_i \ldots \leftarrow \dots v_j$.
Because the solutions to the subproblems, i.e., $G_1$ and $G_2$, are acyclic and there is no direct edge across the variable set $V_1$ and $V_2$, the conflict of second type are definitely triggered by the edges within $C\times C$. When merging results from $G_1$ and $G_2$ in terms of the edges in $C\times C$, there are $e_cR_2\left(1-R_1\right)$ additional true causal edges and $\left(n_c^2-n_c-e_c\right)r_2\left(1-r_1\right)$ additional false causal edges incurred by $G_2$.
Similarly, there are $e_cR_2\left(1-R_1\right)+f_cr_2\left(1-r_1\right)$ extra edges from the results $G_1$ when merging the $G_2$'s results on $C$ to $G_1$.
Thus, there are $e_c\left(R_1+R_2-2R_1R_2\right)+f_cr_2\left(r_1+r_2-2r_1r_2\right)$ edges potentially triggering conflicts between the edges.

Consider a particular edge $v_i \rightarrow v_j$ and the counter-result with path $v_i \ldots \leftarrow \dots v_j$. If there are $k$ intermediate variables on the path, the path appears with probability at most ${n-2\choose k}\left(\frac{d}{n}\right)^{k+1}$, in which $d$ is the maximal in-degree in the variables. By iterating on all possible lengths from 1 to $n-2$, the expected number of conflicted paths triggered by $v_i \rightarrow v_j$ is at most $\sum_{k=1}^{n-2} {n-2\choose k}\left(\frac{d}{n}\right)^{k+1}=\frac{d}{n}\left(\left(1+\frac{d}{n}\right)^{n-2}-1\right)= \frac{d}{n}\left(1+\frac{d}{n}\right)^{n-2}-\frac{d}{n}$.

During the conflict removal step, it is necessary and sufficient to remove exactly one edge with the lowest significance on the path to break the conflict. Such a removed edge is either an actual causal edge with the lowest significance or a false causal edge with the lowest significance. According to Assumption \ref{assum:sig}, the probability of generating lower significance for an actual causal edge against a false causal edge is as small as $\varepsilon$. It facilitates us to calculate an upper bound on the removed actual causal edges by $\left(e_c\left(R_1+R_2-2R_1R_2\right)+f_c\left(r_1+r_2-2r_1r_2\right)\right)\varepsilon$.

Combing both two types of conflicts, the expected number of actual edges removed in the conflict removal step over all conflict cases are upper bounded in Equation (\ref{eqn:conflict_bound}).
The first inequality is derived by the fact that $R_1$, $R_2$, $r_1$ and $r_2$ are no greater than $1$. The second inequality is because $e_c+f_c=\left(n_c\right)^2-n_c < \left(n_c\right)^2$.

\begin{eqnarray}\label{eqn:conflict_bound}
& & \mbox{Exp}\left(e_{co}\right)\nonumber\\
&=& \varepsilon \left(e_c\left(r_2R_1\!+\!R_2r_1\right)+ \left(\frac{d}{n}\left(1\!+\!\frac{d}{n}\right)^{n-2}\!\!-\!\frac{d}{n}\right)\left(e_c\left(R_1\!+\!R_2\!-\!2R_1R_2\right)+f_c\left(r_1\!+\!r_2\!-\!2r_1r_2\right)\right)\right)\nonumber\\
              &\leq& \varepsilon \left(2e_cr+ \frac{d}{n}\left(1+\frac{d}{n}\right)^{n-2}e_c+2f_cr\right)\nonumber\\
              &\leq& \varepsilon \left(2\left(n_c\right)^2r+ \frac{d^2n_c}{n}\left(1+\frac{d}{n}\right)^{n-2}\right)
\end{eqnarray}

The following lemma gives the sufficient condition to ensure that the precision never drops after the conflict removal step in SADA.

\begin{lemma}\label{lemma:conflict_precision}
When the error of significance measure $\varepsilon$ is no larger than $1-P$, the precision on the returned causal edges after conflict removal never drops.
\end{lemma}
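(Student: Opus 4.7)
The plan is to compare the precision $P_{co}$ after conflict removal against the precision $P_{m}$ before it, which reduces the claim to bounding the ``precision of the removed set''. A straightforward algebraic manipulation of $P_{co} = (e_m - e_{co})/(e_m + f_m - e_{co} - f_{co})$ against $P_m = e_m/(e_m+f_m)$ shows that $P_{co} \geq P_m$ holds if and only if $e_{co}/(e_{co}+f_{co}) \leq P_m$, i.e., the fraction of true edges among the removed edges is no larger than the overall merged precision. So the task reduces to proving that the removal process preferentially discards false edges.

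First, I would unify the treatment of the two conflict types described in the preceding derivation (edge-vs-edge cycles in $C\times C$ and edge-vs-path cycles). In every conflict the algorithm removes the single edge with the lowest significance score. Invoking Assumption \ref{assum:sig}, for any true edge $v_1\to v_2$ and any falsely discovered edge $v_3\to v_4$, we have $\Pr(sig(v_1\to v_2) < sig(v_3\to v_4)) \leq \varepsilon$. Hence in any conflict containing at least one true and at least one false edge, the probability that the lowest-significance edge is a true one is at most $\varepsilon$; in conflicts consisting only of false edges the removed edge is certainly false. Aggregating across all conflicts yields $e_{co}/(e_{co}+f_{co}) \leq \varepsilon$.

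Second, I would close the argument by chaining the inequalities. By hypothesis $\varepsilon \leq 1-P$, and by Lemma \ref{lemma:merge_precision} (applied in the regime where its sufficient condition is met) $P_m \geq P$. In the realistic regime where the basic solver's baseline precision satisfies $P \geq 1/2$, this gives $\varepsilon \leq 1-P \leq P \leq P_m$, so $e_{co}/(e_{co}+f_{co}) \leq \varepsilon \leq P_m$, and therefore $P_{co} \geq P_m$, i.e., precision never drops.

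The main obstacle I anticipate is the multi-edge case: Assumption \ref{assum:sig} is stated pairwise, but an edge-vs-path cycle may contain several true and several false edges, and the removed edge is the minimum over all of them. I would handle this by arguing that for a true edge to be the cycle-wide minimum it must in particular have lower significance than every false edge in the cycle, and pairwise-independent application of Assumption \ref{assum:sig} keeps this probability bounded by $\varepsilon$. A secondary subtlety is the implicit use of $P \geq 1/2$ to bridge $1-P \leq P_m$; this should be stated explicitly as a mild background assumption consistent with the paper's running scenarios.
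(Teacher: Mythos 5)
Your proposal is correct at the paper's level of rigor, but it takes a genuinely different route from the paper's proof. The paper argues sequentially, one conflict at a time: by acyclicity of the true structure every conflict cycle contains at least one false edge; conflicts consisting only of false edges strictly improve precision, and for a conflict containing a true edge the paper computes the expected precision after the single removal as $\varepsilon\frac{e'-1}{e'+f'-1}+\left(1-\varepsilon\right)\frac{e'}{e'+f'-1}=\frac{e'-\varepsilon}{e'+f'-1}\geq P$, maintaining the invariant that the expected precision never falls below the baseline $P$. You instead do a global accounting: $P_{co}\geq P_m$ holds iff the removed set's precision $e_{co}/\left(e_{co}+f_{co}\right)$ is at most $P_m$, you bound that removed-set precision by $\varepsilon$ using Assumption~\ref{assum:sig}, and then chain $\varepsilon\leq 1-P\leq P\leq P_m$. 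Your version buys a strictly stronger conclusion---monotonicity relative to the pre-removal precision $P_m$, not merely relative to $P$---but at the cost of importing the hypotheses of Lemma~\ref{lemma:merge_precision} and an explicit $P\geq 1/2$; the paper's per-conflict induction needs neither as a stated hypothesis, only the running invariant that the incoming precision is at least $P$, which Theorem~\ref{theorem:precision} supplies through its condition (1).

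Two caveats, both of which cut in your favor rather than against you. First, the $P\geq 1/2$ ``subtlety'' you flag is in fact latent in the paper's own proof: the inequality $\frac{e'-\varepsilon}{e'+f'-1}\geq P$ follows from $e'\geq P\left(e'+f'\right)$ only when $\varepsilon\leq P$, so the stated hypothesis $\varepsilon\leq 1-P$ suffices exactly when $P\geq 1/2$; your explicit assumption makes visible what the paper leaves implicit. Second, your fix for the multi-edge cycle does not quite work as stated: the events ``true edge $e_i$ is the cycle-wide minimum'' over the $t$ true edges in the cycle are disjoint, and each has probability at most $\varepsilon$ (by comparing $e_i$ against any single fixed false edge), so summing gives a bound of $t\varepsilon$, not $\varepsilon$; requiring a fixed true edge to undercut \emph{every} false edge tightens each individual term but not the sum. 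This is, however, precisely the same unjustified per-conflict $\varepsilon$ that the paper itself asserts when it states that an actual causal edge is removed with probability no larger than $\varepsilon$, so on this point your argument is no weaker than the original.
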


\begin{proof}
	To prove the lemma, we take each conflict into consideration and update the precision on the results once at a time.
	
	
	Due to the acyclic property of the actual causal structure, each conflict cycle must contain at least one false causal edge. We consider two different type of cases in this proof. The first type includes cases of conflict containing false causal edges only. Since at least one false causal edge is removed in the step, the conflict removal definitely improves the precision.
	
	%
	%
	
	The second type of conflicts contains at least one actual causal edge in each conflict. With probability no larger than $\epsilon$, an actual causal edge is removed from the result, otherwise a false causal edge is deleted. Assume that there are $e'$ actual causal edges and $f'$ false causal edges in the result at this particular moment. When $\epsilon\leq 1-P$, the expectation of the new precision after breaking this conflict is no smaller than $P$.
	
	\begin{equation}\label{eqn:conflict_precision2}
	\begin{split}
	\varepsilon \frac{e'-1}{e'+f'-1} + \left(1-\varepsilon\right) \frac{e'}{e'+f'-1}= \frac{e'-\varepsilon}{e'+f'-1}\geq P.
	\end{split}
	\end{equation}
	
	The last inequality is derived by $\frac{e'}{e'+f'}\geq P$ and $\varepsilon \leq 1-P$. This completes the proof of the lemma.
\end{proof}

Given the conclusion of the lemma, when $\varepsilon$ is sufficiently smaller, the conflict removal always brings benefit to the precision of the results, i.e., $\mbox{Exp}\left(P_{co}\right)\geq P$ holds.

\subsection{Effects of Redundancy Removal}

We apply similar analysis strategy on the redundancy removal step as is done on the conflict removal step. Since the redundancy between $v_i\rightarrow v_j$ from $G_1$ and $v_i\rightarrow v_j$ from $G_2$ is already broken in the basic merging step, we only need to consider the redundancy between $v_i\rightarrow v_j$ and $v_i \rightarrow \cdots \rightarrow v_j$ in this step of SADA algorithm.

Similar to the results on the conflict removal step, the path $v_i \dots \rightarrow \dots v_j$ with $k$ intermediate variables appears with probability ${{n-2} \choose k}\left(\frac{d}{n}\right)^{k+1}$.
Considering all the paths with length within the range $k\in[1, n-2]$, the expected number of redundancy path for $v_i\rightarrow v_j$  is $\sum_{k=1}^{n-2} {n-2\choose k}\left(\frac{d}{n}\right)^{k+1}=\frac{d}{n}\left(\left(1+\frac{d}{n}\right)^{n-2}-1\right)= \frac{d}{n}\left(1+\frac{d}{n}\right)^{n-2}-\frac{d}{n}$.
Moreover, there are $e_c\left(R_1+R_2-2R_1R_2\right)+f_c\left(r_1+r_2-2r_1r_2\right)$ extra edges potentially triggering redundancy cycles.
To eliminate the redundancy for each of the case, it is to remove $\beta$ actual causal edges in average, because the conditional independence tests are used to detect such redundancy. The expected number of actual causal edges removed in the redundancy step over all redundancy cases is thus upper bounded by the following formula, with limited impact on the recall of the results.

\begin{equation}\label{eqn:redundancy_bound}
\begin{split}
\mbox{Exp}\left(e_{re}\right)& \leq \beta \left(e_c\left(R_1+R_2-2R_1R_2\right)+f_c\left(r_1+r_2-2r_1r_2\right)\right)\\
              &\leq \beta \left(\frac{d}{n}\left(1+\frac{d}{n}\right)^{n-2}e_c+2\left(n_c\right)^2r\right)\\
              &\leq \beta \left(2\left(n_c\right)^2r+ \frac{d^2n_c}{n}\left(1+\frac{d}{n}\right)^{n-2}\right)\\
\end{split}
\end{equation}

Regarding the precision, the following lemma gives the sufficient condition to the improvement on precision by the redundancy removal step.

\begin{lemma}\label{lemma:redundancy_precision}
If the error probability $\beta$ is no larger than $1-P$, the expected precision $\mbox{Exp}\left(P_{re}\right)\geq P$ holds.
\end{lemma}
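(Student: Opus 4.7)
The plan is to mirror the proof of Lemma \ref{lemma:conflict_precision} very closely, exploiting the structural analogy that a removal operation whose error probability is bounded by $1-P$ preserves precision in expectation. First, I would reduce the claim to a per-step statement: it suffices to show that each redundancy case, processed independently by the outer loop in Algorithm \ref{algo:merge}, does not decrease the running precision in expectation; iterating across all such cases then yields $\mbox{Exp}\left(P_{re}\right)\geq P$.

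Next, for a single redundancy case involving an edge $v_1\rightarrow v_2$ and a path $v_1\rightarrow\cdots\rightarrow v_2$, I would split into two subcases according to whether $v_1\rightarrow v_2$ belongs to the true causal structure. If $v_1\rightarrow v_2$ is a false causal edge, then the dependency it represents is already explained by the path, so $v_1\bot v_2 | V'$ truly holds for some $V'\subset Path\left(v_1\rightarrow v_2\right)$, and the conditional independence test correctly removes it, strictly improving precision. If $v_1\rightarrow v_2$ is an actual causal edge, then $v_1\not\bot v_2 | V'$ for every conditioning set, and by Assumption \ref{assum:cit} the test falsely declares independence with probability at most $\beta$, so this true edge is removed with probability at most $\beta$.

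The critical calculation then mimics Equation (\ref{eqn:conflict_precision2}) almost verbatim: letting $e'$ and $f'$ denote the current counts of actual and false causal edges in $G$ just before this redundancy case is handled, the expected precision after the removal decision is at least $\frac{e'-\beta}{e'+f'-1}$, which the hypotheses $\beta\leq 1-P$ and $\frac{e'}{e'+f'}\geq P$ force to be at least $P$. Aggregating across all redundancy cases completes the argument. The main obstacle I anticipate is ensuring that the case split is truly exhaustive, particularly when the redundancy path itself contains false causal edges merged in from the subproblems; I would argue that the per-step bound still applies because the only action taken is a single removal decision governed by a single conditional independence test whose error is controlled by $\beta$, so the same telescoping precision argument carries through unchanged.
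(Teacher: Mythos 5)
Your proposal is correct relative to the paper and follows essentially the same route as the paper's own proof: a per-removal expected-precision computation $\mbox{Exp}\left(P''\right)=\beta\,\frac{e'-1}{e'+f'-1}+\left(1-\beta\right)\frac{e'}{e'+f'-1}=\frac{e'-\beta}{e'+f'-1}\geq P$, telescoped over the redundancy cases starting from the initial precision $P'=P_{co}\geq P$. Your explicit case split on whether $v_1\rightarrow v_2$ is an actual causal edge, with Assumption \ref{assum:cit} bounding the probability of a falsely declared independence by $\beta$, merely spells out what the paper leaves implicit.
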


\begin{proof}
	Let $P'=\frac{e'}{e'+f'}$ denote the current precision $P'=\frac{e'}{e'+f'}$ and $P''$ is the precision after removing one potential redundant edge. Given the condition $P'\geq P$,
	We have the following inequality about $P''$.
	
	\begin{equation}\label{eqn:redundancy_precision}
	\begin{split}
	\mbox{Exp}\left(P''\right)&=\beta \frac{e'-1}{e'-1+f'} + \left(1- \beta\right) \frac{e'}{e'+f'-1}= \frac{e'-\beta}{e'+f'-1}\geq P
	\end{split}
	\end{equation}
	The last inequality is because of $\frac{e'}{e'+f'}=P' \geq P$ and $\beta \leq 1-P$.
	
	In the potential redundancy removal, the initial precision is $P'=P_{co}$ and $P'\geq P$ holds. Thus, $\mbox{Exp}\left(P_{re}\right)\geq P$ holds.
\end{proof}

\subsection{Overall Evaluation of SADA}

In this section, we combine all the results in previous subsection and provide an overall evaluation on the recall/precision of SADA.

Equation (\ref{eqn:recall}) gives an estimation on the recall after each recursive partitioning step. Theorem \ref{theorem:recall} gives the sufficient condition of SADA's recall is higher than that of the basic causal solver.

\begin{equation}\label{eqn:recall}
    \mbox{Exp}\left(R_{SADA}\right)=\frac{\mbox{Exp}\left(e_{m})-\mbox{Exp}\left(e_{co}\right)-\mbox{Exp}\left(e_{re}\right)\right)}{e}\\
\end{equation}

\begin{theorem}\label{theorem:recall}
When $\delta \geq e^{-1}\left(\lceil\frac{n^2e\beta}{4f\left(1-\alpha\right)}\rceil+1+\left(\varepsilon+\beta\right)n_c\left(2n_cr+ \frac{d^2}{n}\left(1+\frac{d}{n}\right)^{n-2}\right)\right)$, we always have $\mbox{Exp}\left(R_{SADA}\left(G\right)\right) \geq R\left(G\right)$.
\end{theorem}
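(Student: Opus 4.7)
The plan is to combine the bounds on $\mbox{Exp}(e_m)$, $\mbox{Exp}(e_{co})$ and $\mbox{Exp}(e_{re})$ already assembled in the preceding subsections and verify that the hypothesis on $\delta$ is exactly what is needed to make the resulting inequality go through. The theorem is really a bookkeeping consequence of the four separate analyses (causal cut, merging, conflict removal, redundancy removal), so no new probabilistic argument is required.

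First I would start from the identity preceding the theorem, $\mbox{Exp}(R_{SADA}) = (\mbox{Exp}(e_m) - \mbox{Exp}(e_{co}) - \mbox{Exp}(e_{re}))/e$. From the merging bound derived earlier, $\mbox{Exp}(e_m) \geq eR + e\delta - \lceil n^2 e \beta / (4 f (1-\alpha)) \rceil - 1$, which already absorbs the causal-cutting error through its dependence on $\mbox{Exp}(e_{ce})$. From the conflict- and redundancy-removal bounds I would sum $\mbox{Exp}(e_{co}) + \mbox{Exp}(e_{re}) \leq (\varepsilon + \beta)\, n_c \bigl( 2 n_c r + (d^2/n)(1+d/n)^{n-2} \bigr)$, since both final lines of those subsections share the common envelope $n_c(2 n_c r + (d^2/n)(1+d/n)^{n-2})$.

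Substituting these into the recall identity gives $e\,\mbox{Exp}(R_{SADA}) \geq eR + e\delta - \bigl(\lceil n^2 e \beta / (4 f (1-\alpha)) \rceil + 1 + (\varepsilon + \beta)\, n_c ( 2 n_c r + (d^2/n)(1+d/n)^{n-2}) \bigr)$. Requiring the right-hand side to dominate $eR$ (i.e., requiring $e\delta$ to swallow the bracketed error budget) and then dividing by $e$ yields exactly the hypothesis in the theorem statement, where $e^{-1}$ denotes the reciprocal of the edge count $e$ rather than Euler's constant. Hence $\mbox{Exp}(R_{SADA}) \geq R$.

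The main obstacle is purely notational rather than mathematical. I would double-check that (i) the causal-cut error $\mbox{Exp}(e_{ce})$ has already been folded into the merging bound and is therefore not double-counted in the conflict/redundancy contributions, and (ii) the envelopes for $\mbox{Exp}(e_{co})$ and $\mbox{Exp}(e_{re})$ really factor through the same expression in $n_c$, $r$, $d$, $n$, so the $(\varepsilon+\beta)$ grouping in the final statement is legitimate. Both points are easy to confirm from the last-line simplifications of Equations labelled \emph{conflict\_bound} and \emph{redundancy\_bound}. Finally, I would flag that the claim corresponds to a single recursive partitioning level, so for the full recursive SADA the inequality would have to be applied inductively along the recursion tree.
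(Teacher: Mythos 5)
Your proposal is correct and follows essentially the same route as the paper's own proof: substitute the merging lower bound (which already contains $\mbox{Exp}\left(e_{ce}\right)$) and the summed conflict/redundancy upper bounds, which share the common envelope $n_c\left(2n_cr+\frac{d^2}{n}\left(1+\frac{d}{n}\right)^{n-2}\right)$, into the recall identity and let $e\delta$ absorb the error budget. Your reading of $e^{-1}$ as the reciprocal of the edge count $e$ (not Euler's constant) is exactly right --- the paper writes the same condition as $\delta \geq \frac{1}{e}\left(\cdots\right)$ in its proof --- and your observation that the bound applies per partitioning step matches the paper's framing as well.
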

\begin{proof}
	Combing the lower bound on the number of discovered edges in the merging step (in Equation (\ref{eqn:merge_bound})), the upper bound of falsely removed causal edges in the conflict removal (in Equation (\ref{eqn:conflict_bound}))
	and redundancy removal step (in Equation (\ref{eqn:redundancy_bound})), we come to the following conclusion on the expectation of $R_{SADA}$:
	
	\begin{equation}\label{eqn:recall2}
	\begin{split}
	\mbox{Exp}\left(R_{SADA}\right)&=\frac{\mbox{Exp}\left(e_{m}\right)-\mbox{Exp}\left(e_{co}\right)-\mbox{Exp}\left(e_{re}\right)}{e}\\
	&\geq \frac{eR+e\delta-\lceil\frac{n^2e\beta}{4f\left(1-\alpha\right)}\rceil-1 -\left(\varepsilon+\beta\right)n_c\left(2n_cr+ \frac{d^2}{n}\left(1+\frac{d}{n}\right)^{n-2}\right) }{e} \\
	&\geq R
	\end{split}
	\end{equation}
	
	The last inequality is because of $\delta \geq \frac{1}{e}\left(\lceil\frac{n^2e\beta}{4f\left(1-\alpha\right)}\rceil+1+\left(\varepsilon+\beta\right)n_c\left(2n_cr+ \frac{d^2}{n}\left(1+\frac{d}{n}\right)^{n-2}\right)\right)$.
\end{proof}

The above theorem gives the sufficient condition to generate higher recall than the basic causal solver directly applied on the original problem. In the following, we demonstrate that this sufficient condition can be easily satisfied in real applications. Given the typical setting with variable number $n=100$, average in-degree $d=1.25$, $\alpha=0.05$,  $\beta=0.05$, $\varepsilon=0.05$ and $n_c=10$, the minimal $\delta$ required is $0.0404$. We also illustrate the minimal $\delta$s under different average in-degree $c$ and causal cut set size $n_c$ in Figure \ref{fig:delta}. The results in the figure shows that $5\%$ improvement on subproblems with smaller domain is enough to help improve the accuracy by employing SADA. Even when the size of causal cut set is as large as 20, SADA is capable of achieving better accuracy if the basic causal solver is able to improve $10\%$ on the subproblems. Figure \ref{fig:delta_nc} also reflects the fact that it is important to control the size of causal cut set, under which SADA could guarantee more performance enhancements.

\begin{figure} [h]
  \centering
  \subfigure[Degree]{
    \includegraphics[width=0.47\textwidth]{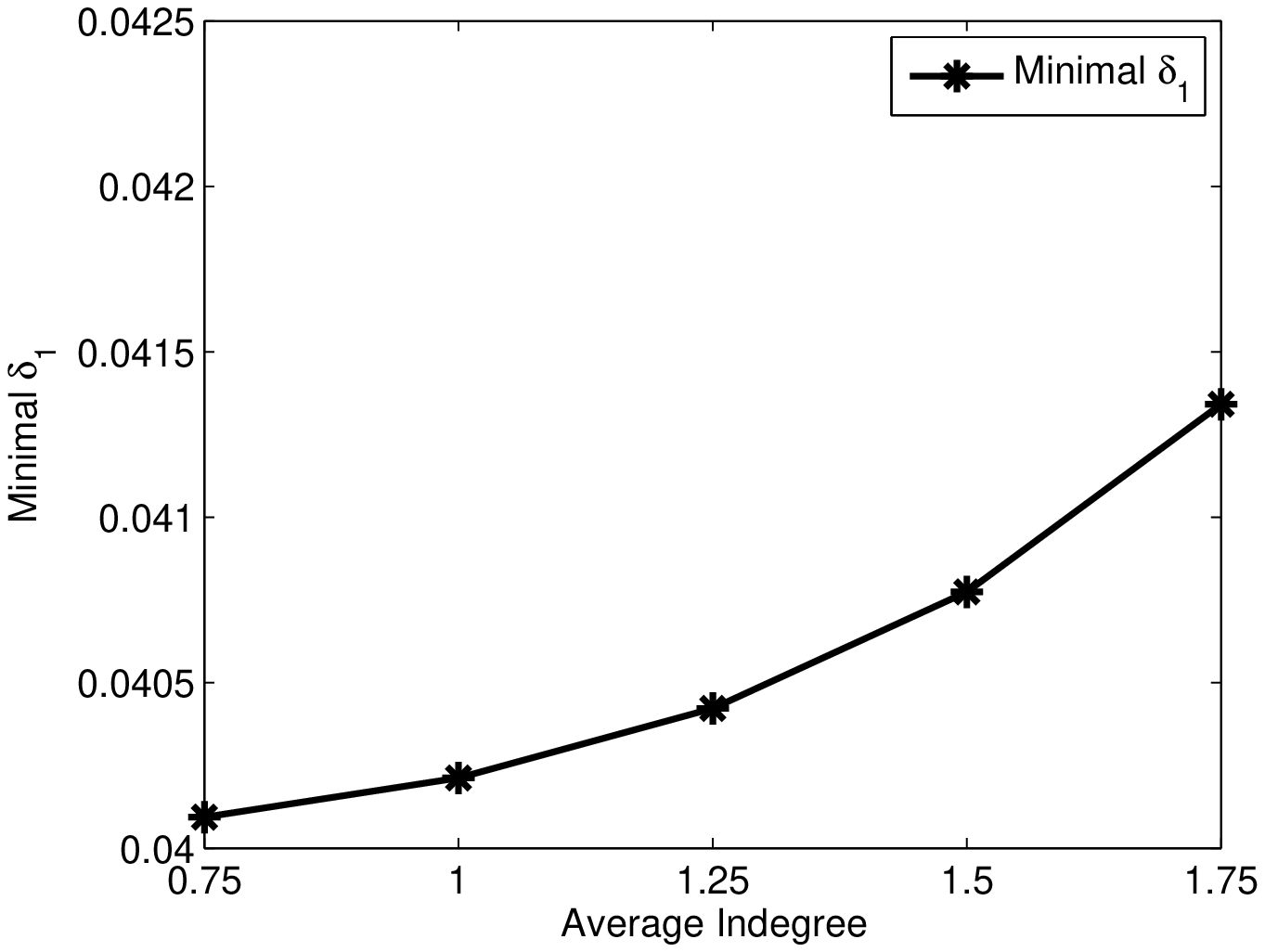}
    \label{fig:delta_degree}
  }
  \subfigure[Size of Causal Cut Set]{
    \includegraphics[width=0.47\textwidth]{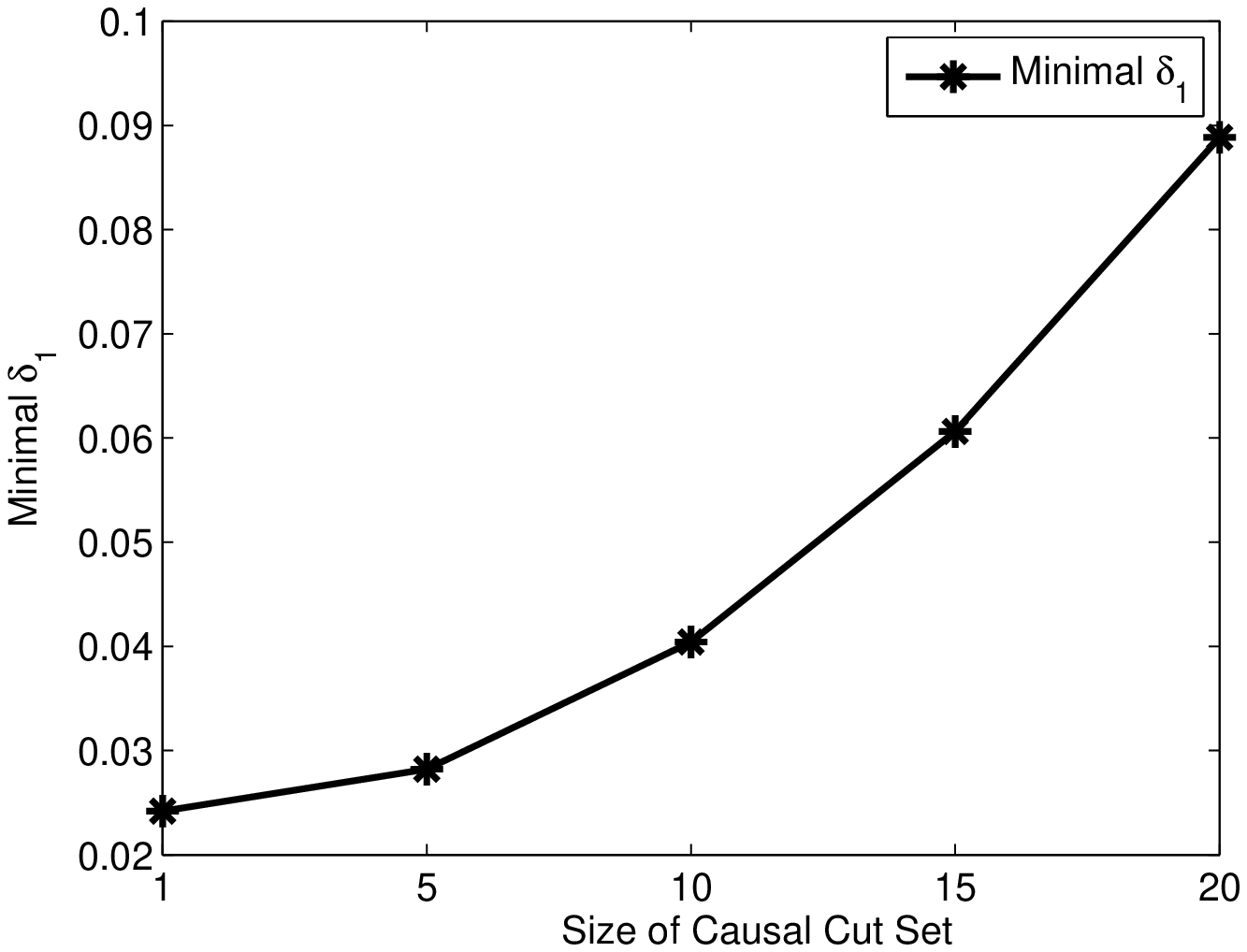}
    \label{fig:delta_nc}
  }
  \caption{Minimal $\delta$ with different average in-degree and size of causal cut set.}
  \label{fig:delta}
\end{figure}

\begin{theorem}\label{theorem:precision}
When the following three conditions hold: (1) $\delta > \frac{Pf_c\left(r-r^2\right)}{\left(1-P\right)\left(e_1+e_2+e_c\right)}$ or $\gamma > \frac{f_cr}{f_1+f_2+2f_c}$, (2) $\varepsilon$ is no larger than $1-P$ and (3) $\beta$ is no larger than $1-P$, we always have $\mbox{Exp}\left(P_{SADA}\right)\geq P$.
\end{theorem}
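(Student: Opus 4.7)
The plan is to compose the three precision-preservation lemmas proved earlier, one per post-processing stage of SADA, since conditions $(1)$, $(2)$, and $(3)$ of the theorem match exactly the hypotheses of Lemma \ref{lemma:merge_precision}, Lemma \ref{lemma:conflict_precision}, and Lemma \ref{lemma:redundancy_precision} in that order. First I would invoke Lemma \ref{lemma:merge_precision} using condition $(1)$ to obtain $\mbox{Exp}(P_{m}) \geq P$ after the basic merging of the partial solutions $G_1$ and $G_2$. Next, treating the output of merging as the input to conflict removal, I would invoke Lemma \ref{lemma:conflict_precision} with condition $(2)$ to deduce $\mbox{Exp}(P_{co}) \geq P$. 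Finally, I would apply Lemma \ref{lemma:redundancy_precision} with condition $(3)$ to obtain $\mbox{Exp}(P_{re}) \geq P$, and since $P_{SADA} = P_{re}$ at the end of one partitioning round, the theorem follows.

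The subtlety that requires care is the sequential composition in expectation. Lemma \ref{lemma:conflict_precision} and Lemma \ref{lemma:redundancy_precision} were established by showing that, for an intermediate configuration with $e'$ true and $f'$ false discovered edges satisfying $\frac{e'}{e'+f'} \geq P$, the expected precision after resolving one conflict or one redundancy is still $\geq P$, via the pointwise inequalities $\frac{e' - \varepsilon}{e' + f' - 1} \geq P$ and $\frac{e' - \beta}{e' + f' - 1} \geq P$. I would therefore formalize an inductive argument along the sequence of processed edges: the invariant "current precision $\geq P$" is preserved across every conflict resolution and every redundancy check. The base case uses $\mbox{Exp}(P_{m}) \geq P$ from the merging lemma, and the induction step is precisely the one-step calculation already carried out inside the previous two lemmas.

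The main obstacle, and the only step that needs more than bookkeeping, is making the expectation chaining rigorous when the intermediate precisions $P_m$, $P_{co}$, $P_{re}$ are random rather than deterministic. The cleanest route is to condition on the realization of the preceding stage, apply the edge-by-edge invariant pointwise under that conditioning so that it reduces to the inequalities already verified in the proofs of the two removal lemmas, and then take the outer expectation by the tower property to conclude $\mbox{Exp}(P_{SADA}) \geq P$. Once this conditioning scheme is set up, the remainder of the argument is purely a restatement of Lemma \ref{lemma:merge_precision}, Lemma \ref{lemma:conflict_precision}, and Lemma \ref{lemma:redundancy_precision} in sequence, with no additional calculation required.
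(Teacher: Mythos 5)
Your proposal takes exactly the paper's route: the paper's proof likewise invokes Lemma \ref{lemma:merge_precision} under condition (1) to get $P_m \geq P$, then applies Lemma \ref{lemma:conflict_precision} and Lemma \ref{lemma:redundancy_precision} under conditions (2) and (3) to conclude that precision is not reduced by conflict removal or redundancy removal, yielding $\mbox{Exp}\left(P_{SADA}\right)\geq P$. Your extra care in chaining the expectations via conditioning and the tower property is a rigor refinement that the paper's three-line proof silently omits, but it is the same argument, not a different one.
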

\begin{proof}
	Based on Lemma \ref{lemma:merge_precision}, $P_{m}\geq P$ holds after the merging step.
	According to Lemma \ref{lemma:conflict_precision} and \ref{lemma:redundancy_precision}, the precision is not also reduced in each process of conflict removal and redundancy removal. Thus, $\mbox{Exp}\left(P_{SADA}\right)\geq P$ holds.
\end{proof}

Given the same setting as used in Figure \ref{fig:delta}, when the basic causal solver achieves precision $P=0.5$ on the original problem, SADA improves the precision when $\delta >0.08$ or $\gamma >0.002$. Although the minimal requirement on $\delta$ for better precision is higher than that for better recall, both the increase on the recall on true causal edges and the decrease on the number of falsely discovered edges could contribute to the improvement of the precision.


Note that both the conditions given in Theorem \ref{theorem:recall} and Theorem \ref{theorem:precision} are \emph{only} sufficient conditions to the accuracy improvements in SADA, and with a number of loose inequalities are in the proof of the theorems. Thus, SADA improves the performance of the basic causal solvers under much more general conditions in practice, i.e., SADA still improves the performance of the basic causal solvers even when the above conditions are not fully satisfied. In the experiments, we empirically evaluate the effects and verify the advantages of SADA.

\section{Experiments}\label{sec:exp}

\subsection{Experiment Settings}

We evaluate our proposal on datasets generated by simulated and different real-world Bayesian network structures\footnote{\url{www.cs.huji.ac.il/site/labs/compbio/Repository/}}, under linear non-Gaussian model and discrete additive noise model. Because of the non-existence of large scale causal inference problem with ground truth, simulated data on the given structures is used in most of causal structure learning methods \cite{aliferis_local_2010,kalisch2007pc}. Please note that only faithfulness condition and causal sufficiency assumption are employed in the generic SADA framework. Additional compatible data generation assumptions, linear non-Gaussian assumption and additive noise assumption, are employed for linear non-Gaussian model and discrete additive noise model, respectively.

\noindent\textbf{Linear Non-Gaussian Acyclic Model}\\
Under the assumption of linear non-Gaussian acyclic model, the samples are generated based on linear functions as $v_i=\sum_{v_j\in P\left(v_i\right)}w_{ij}v_j+e_i$. When randomly generating these linear functions, we restrict that $\sum_{P\left(v_i\right)}w_{ij}=1$ and the variance $Var\left(e_i\right)=1$ for every variable $v_i$. 

On the linear non-Gaussian acyclic model, our algorithm is compared with LiNGAM \cite{ShimizuJMLR06}, DLiNGAM \cite{ShimizuJMLR2011Direct} and Sparse-ICA LiNGAM \cite{zhang2009sica}. The implementation of LiNGAM and DLiNGAM are from the authors of the paper. The implementation of Sparse-ICA LiNGAM is based on the sparse-ICA of \cite{zhang2009sica}, and the pruning algorithm of \cite{ShimizuJMLR06}.  For SADA, we employ the conditional independence tests following the method proposed in \cite{Baba2004PartialCorrelation}, with threshold at 95\%. LiNGAM \cite{ShimizuJMLR06} with Wald test is appointed as the basic casual solver $A$ after SADA reaches the minimal scale threshold $\theta$ at subproblems. On all datasets, SADA stops the partitioning when the subproblem reaches the size $\theta=10$. The recursive partitioning is also terminated when Algorithm \ref{algo:split} fails to find any causal cut. LiNGAM without applying any division is also used as the baseline approach, when reporting recall, precision and F1 score. Note that, when the variable size is larger than 100, LiNGAM cannot perform Wald test due to memory consumption constraint (i.e., one Wald test cannot be finished on a sever with 64GB memory for a whole day).

\noindent\textbf{Discrete Additive Noise Model}\\
The generation process of the discrete data follows the cyclic method used in \cite{Janzing2011tpamidiscrete} under additive noise model (ANM) for causation discovery on discrete data. Each variable is restricted to 3 different values and values are randomly generated based on conditional probability tables. The implementation of SADA for discrete domain is slightly different from that for continuous domain. $G^2$ test \cite{spirtes2011} is employed as the conditional independence test, with the threshold at $95\%$. The basic causal solver $A$ called by SADA is a brute force method to find all causalities on problems of small scale. Again, the brute-force ANM without variable division is also employed as a baseline approach. The ANM algorithm checks every possible pair of variables following the method proposed in \cite{Janzing2011tpamidiscrete}, and the redundancy and conflict edges are removed using the similar method as the merging step of SADA.

In all the experiments, the evaluation metric includes, \emph{causal cutting error}, \emph{recall}, \emph{precision} and \emph{F1 score}. The causal cutting error ratio is $e_{ce}/e$, i.e., the number of falsely removed causal edges in the causal cut finding step to the number of all causal edges. F1 score is calculated as $\frac{2P\times R}{P+R}$, in which $R$ and $P$ are recall and precision on the causal edge results respectively. Causal cutting error is evaluated on SADA, and the other three metric are evaluated for both SADA and the baseline method. The experiments are compiled and run with Matlab 2009a on a windows PC equipped with a dual-core 2.93GHz CPU and 2GB RAM, and a Linux sever with a 16-core 2.0GHz CPU and 60GB RAM. All Matlab codes of the causation discovery and the generator for linear non-Gaussian data are available at \url{https://sites.google.com/site/cairuichu/SADA.zip} for testing.

\subsection{Results on Simulated Structure}

The main purpose of this group of experiments is to evaluate the scale effect on the basic causal solvers and the sensitivity of our proposal to the variable size, sample size, connectivity, and other characteristics of the causal structures.

The simulated structures are randomly generated under control of a few parameters, including the variable size and average in-degree. The average in-degree reflects the local connectivity of the causal structure. In the structure generation process, all variables are sorted in topological order of the simulated causal structure, so that parent variables are always ahead of children variables. The samples are generated exactly in the order, ensuring that the values of parent variables are generated before the children nodes. The details of the causal structure generation can be found in Algorithm \ref{algo:csg} in the appendix. Given the causal structures, the data is generated using linear non-Gaussian model or discrete model as described earlier. For linear non-Gaussian model, an additional parameter, called noise weight, is used to control the ratio of noise in the data generation process, please refer to Algorithm \ref{algo:ldg} for the details. The parameters settings, including both structure generation and data generation phrases, are given in Table \ref{tab:para}, with default values highlighted in bold font.

\begin{table}[!ht]
\center
\caption{Ranges and Defaults of the Parameters in Simulated Structures}
\label{tab:para}
\begin{tabular}{|c|c|}
\hline Paremeter & Range \\
\hline Variable Size & \{25, 50, \textbf{100}, 200, 400 \}\\
\hline Sample Size &\{50, 100, \textbf{200}, 400, 800 \} \\
\hline In-degree & \{0.75, 1, \textbf{1.25}, 1.5, 1.75 \} \\
\hline Noise Weight in LiNGAM & \{0.1, 0.2, \textbf{0.3}, 0.4, 0.5 \} \\
\hline
\end{tabular}
\end{table}

\noindent\textbf{Effects of Partitioning on the Basic Causal Solver's Recall and Precision}

In the theoretical analysis of the framework, we have shown that the improvement of recall and precision highly depends on the amount of accuracy gain on subproblems with smaller scales. While the conditions theoretically guarantee effectiveness, the actual impact is hardly reflected in the fairly loose bounds. To understand the actual effects of the partitioning, we test the splitting procedure under careful control on the sample size, in-degree and noise ratio, and vary on the number of variables.


The effects of partitioning on the basic causal solver's recall and precision are summarized in Figure \ref{fig:sim_split}, on the subproblems with size\{100,50,25,13,6\}. On the linear non-Gaussian data, the precision increases after each split, and the benefits on recall emerge when the variable size is small enough. Since SADA is capable of generating smaller subproblems for the basic causal solvers, the split brings benefits to both recall and precision. On discrete data, the recall is relative stable for it only checks each pair of variables independently and has less connection to the domain size, i.e., the number of variables. The precision increases after each split, as the suspicions causal pairs can be removed in the split step.
The phenomena reflect that the two sufficient condition for the improvement of recall and precision can be more easily satisfied on the linear non-Gaussian data, while only the sufficient condition for the improvement of precision can be easily satisfied on the discrete data.

\begin{figure} [h]
  \centering
  \subfigure[Linear Non-Gaussian Model]{
    \includegraphics[width=0.47\textwidth]{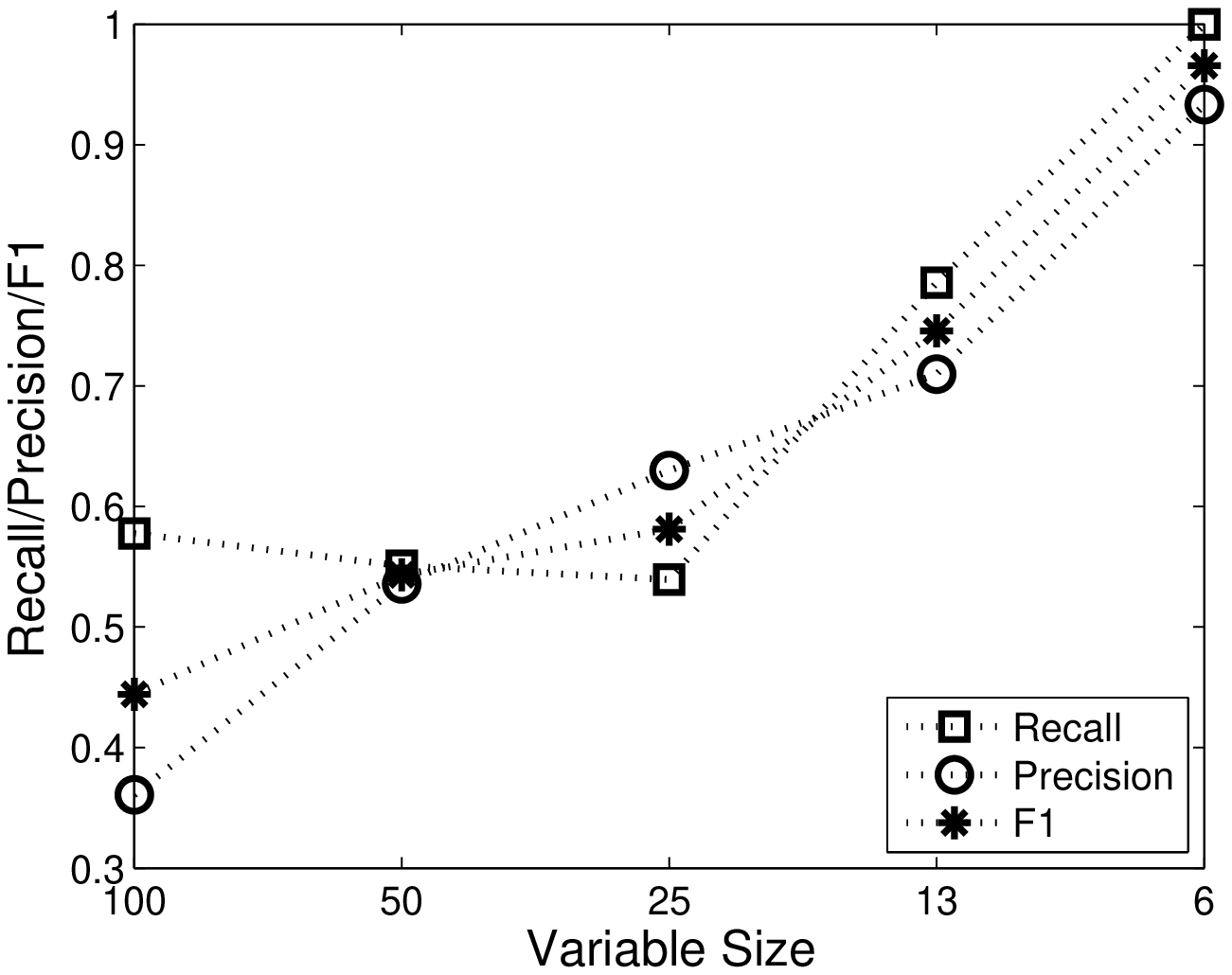}
    \label{fig:sim_split_lingam}
  }
  \subfigure[Discrete Model]{
    \includegraphics[width=0.47\textwidth]{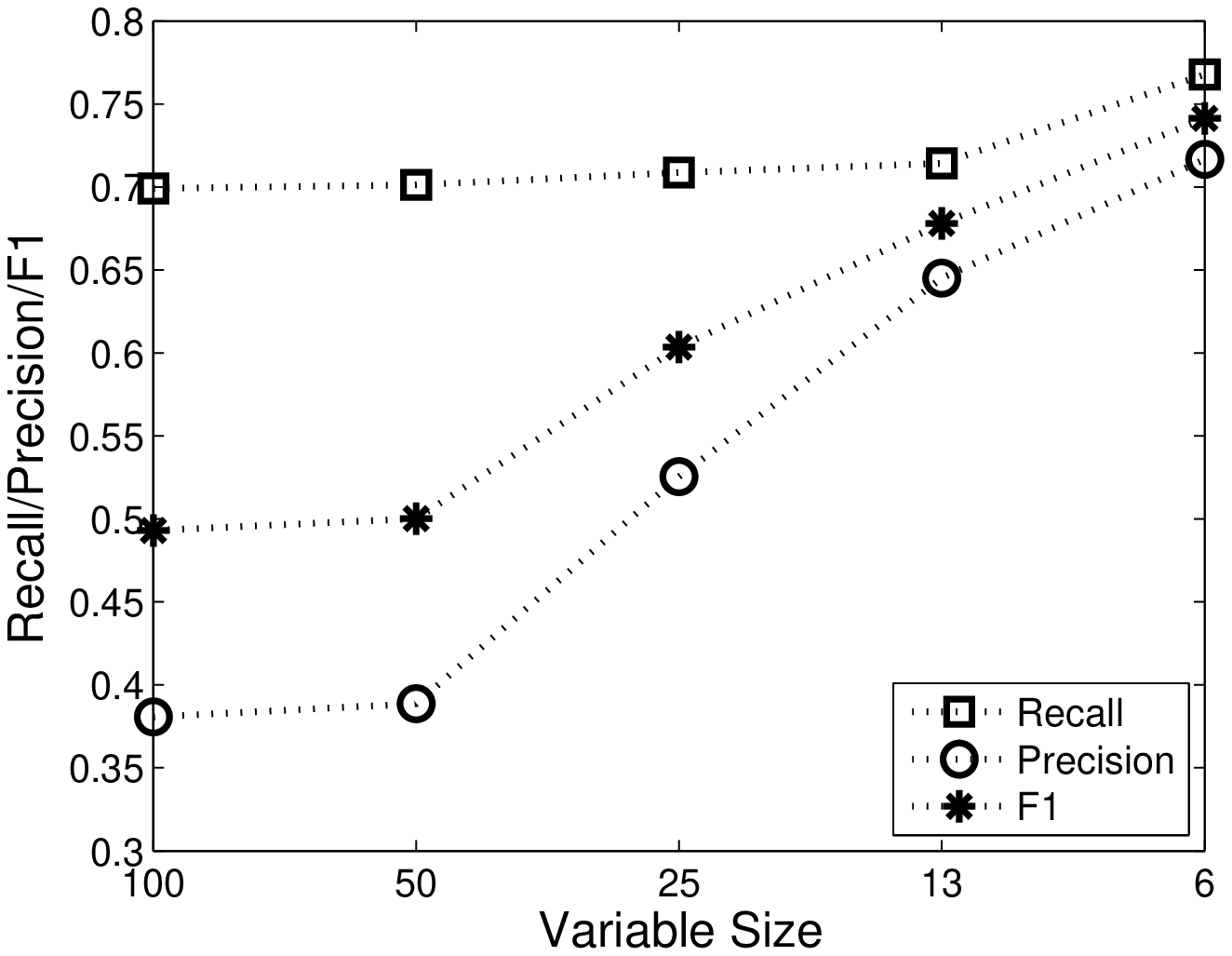}
    \label{fig:sim_split_disc}
  }
  \caption{Split effect in recall and precision of basic causal solver.}
  \label{fig:sim_split}
\end{figure}

\noindent\textbf{Sensitivity to Repeat Time of Finding Causal Cut}

	The repeat time of finding causal cut is another parameter of SADA. In Theorem \ref{theorem:convergence}, we provide a bound on the causal cut size when $k=(2d_m+2)^2$. It is interesting to investigate the effect of the different setting of this parameter. In this experiment, we following parameters are tested,  $k=\{1, 5, 10, 15, 20\}$, where 20 is setting based on the $k=(2d_m+2)^2$. 
	
	Figure \ref{fig:sim_sens_repeat} shows the sensitivity of SADA to the repeat time of finding causal cut. It is interesting to find that the algorithm works well when $k=1$, and the improvement is trivial with increasing of the repeat time. Because of the high computational complexity of the causal cut finding, the repeat time is 1 for all the following experiments.

\begin{figure} [h]
	\centering
	\subfigure[Linear non-Gaussian model]{
		\includegraphics[width=0.47\textwidth]{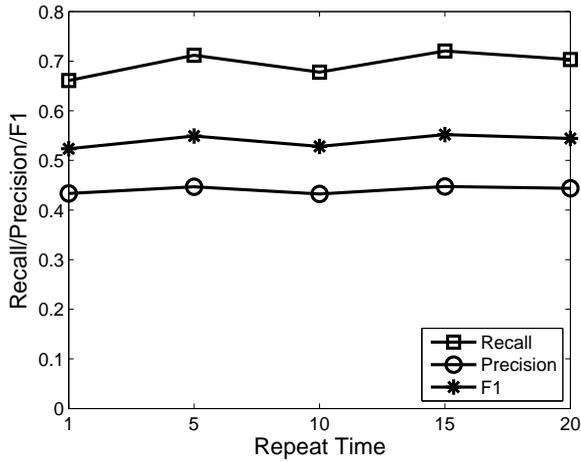}
		\label{fig:sim_sens_repeat_lingam}
	}
	\subfigure[Discrete model]{
		\includegraphics[width=0.47\textwidth]{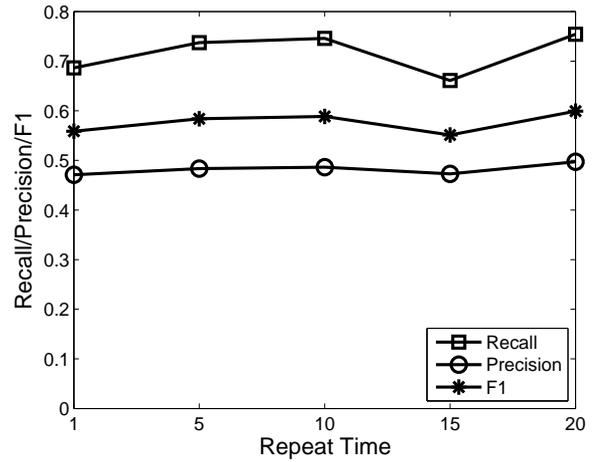}
		\label{fig:sim_sens_repeat_disc}
	}
	\caption{Scalability to the repeat time of finding causal cut.}
	\label{fig:sim_sens_repeat}
\end{figure}

\noindent\textbf{Scalability to Domain Size}

Figure \ref{fig:sim_sens_var_lingam} and Figure \ref{fig:sim_sens_var_disc} report the effects on recall, precision and F1 score, under varying number of variables, on linear non-Gaussian data and discrete data. Generally, SADA works much better on all different numbers of variables. Note that the gap between the methods grows when more variables are in the data domain. This property ensures SADA's scalability to large domains. On the linear non-Gaussian data, LiNGAM fails to work when the variable size is larger than 100, while SADA still achieves good accuracy performance. The figures further strengthens the conclusions of Theorem \ref{theorem:recall} and Theorem \ref{theorem:precision}. In particular, on linear non-Gaussian data, both the sufficient conditions for the improvement of precision and recall are satisfied (as illuminated in Figure \ref{fig:sim_split_lingam}). Compared against the LiNGAM, both the recall and precision are lifted. On discrete data, the sufficient condition for the improvement of precision is satisfied as well (as illuminated in Figure \ref{fig:sim_split_disc}), the improvement mainly stems from the enhanced precision on the results.


\begin{figure} [h]
  \centering
  \subfigure[Linear non-Gaussian model]{
    \includegraphics[width=0.47\textwidth]{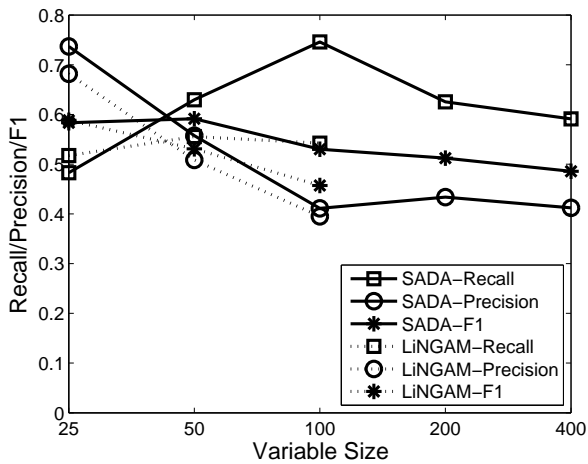}
    \label{fig:sim_sens_var_lingam}
  }
  \subfigure[Discrete model]{
    \includegraphics[width=0.47\textwidth]{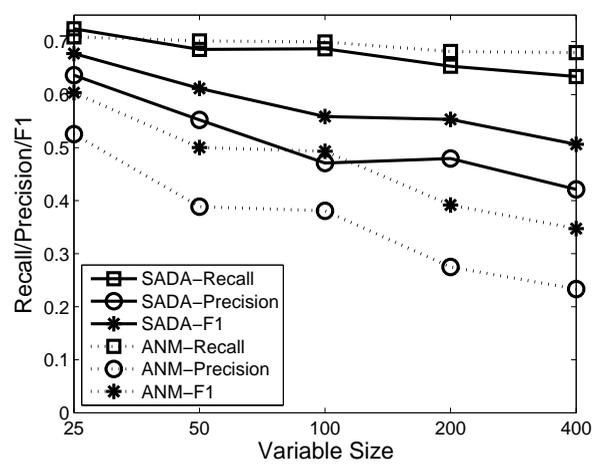}
    \label{fig:sim_sens_var_disc}
  }
  \caption{Scalability to the variable size.}
  \label{fig:sim_sens_var}
\end{figure}

\noindent\textbf{Sensitivity to the Sample Size}

Figure \ref{fig:sim_sens_sample} analyzes the sensitivity of SADA and the compared methods to the sample size.
SADA works better than the compared methods, regardless of the sample size on both linear non-Gaussian data and discrete data.
Moreover, SADA also works well even when LiNGAM fails to work on the linear non-Gaussian data, as show in Figure \ref{fig:sim_sens_sample_lingam}. When sample size is 50 or 100, the sample size is smaller than the number of variables. In such case, LiNGAM fails to work, while SADA framework performs well. This is because SADA can effectively split the original problem into small subproblems solvable to the basic causal solver with small sample size. This is a fundamental advantage of the SADA framework. The performance of SADA improves slightly with the increase on sample size. There are two reasons behind the improvement. Firstly, large sample size improves the reliability of the conditional independence tests (i.e., smaller $\alpha$ and $\beta$) used to find the causal cut and reduce the causal cutting error of SADA. Secondly, large sample size helps basic causal solvers on accuracy, as there are more observations to identify reliable causal relations.


\begin{figure} [h]
  \centering
  \subfigure[Linear non-Gaussian model]{
    \includegraphics[width=0.47\textwidth]{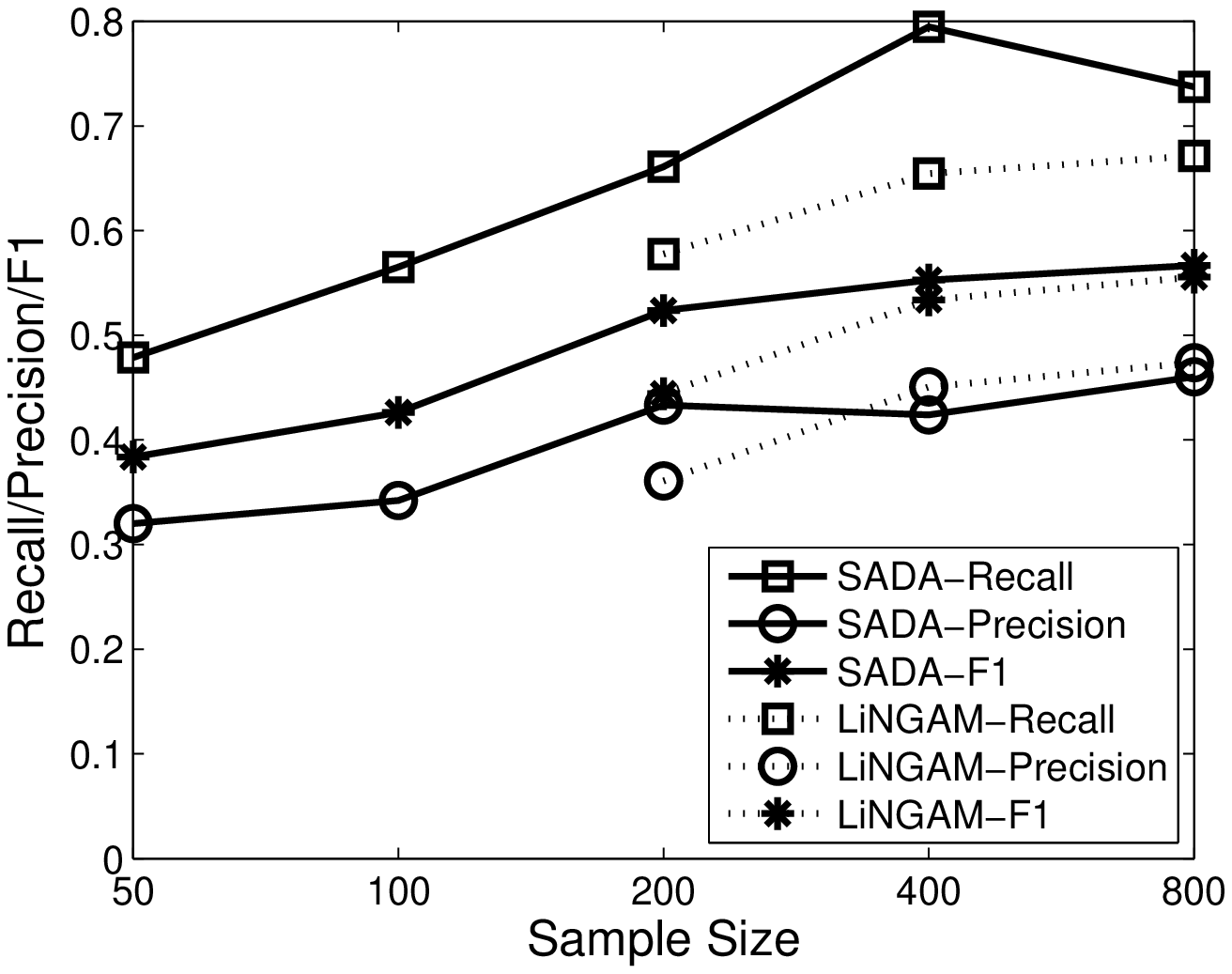}
    \label{fig:sim_sens_sample_lingam}
  }
  \subfigure[Discrete model]{
    \includegraphics[width=0.47\textwidth]{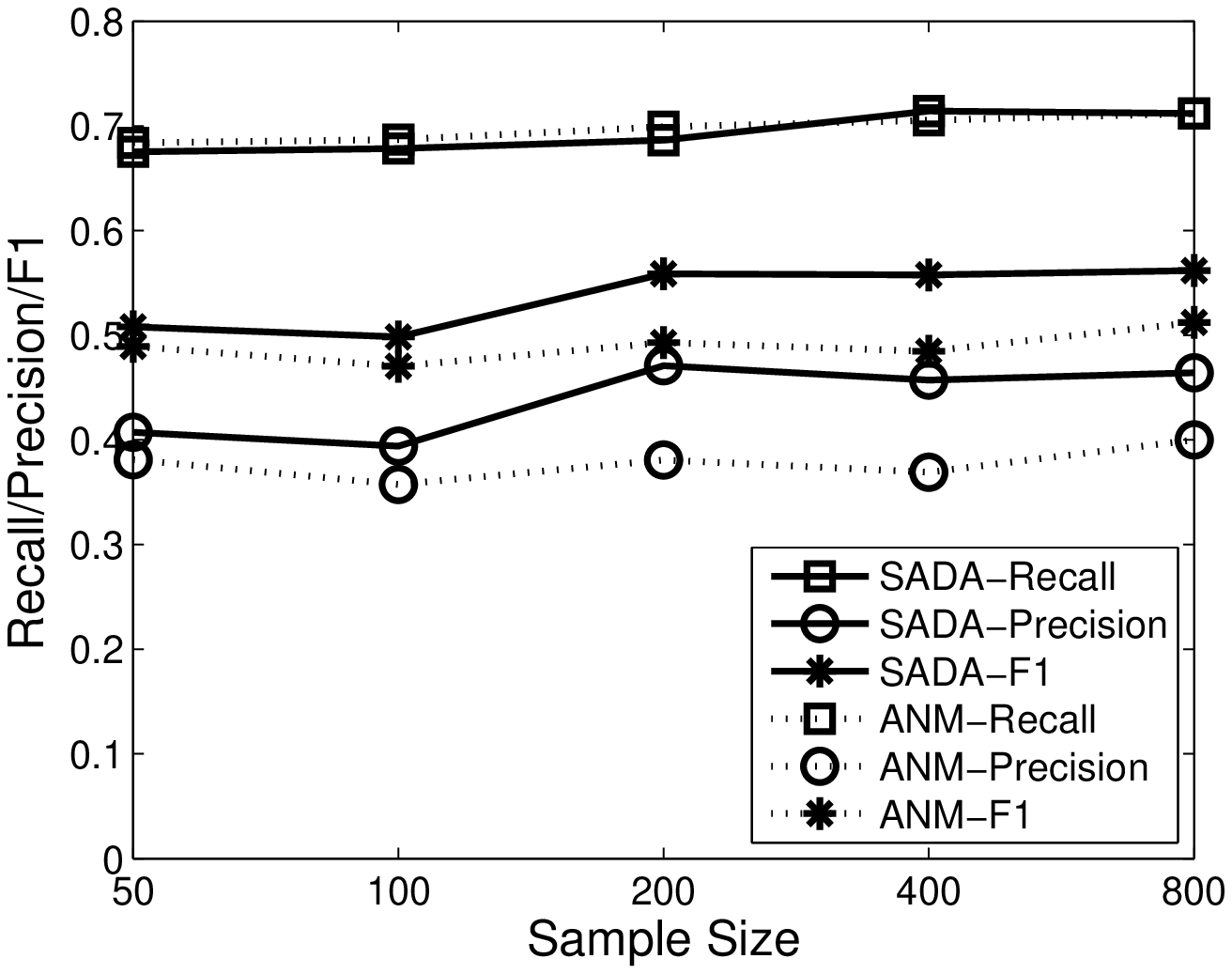}
    \label{fig:sim_sens_sample_disc}
  }
  \caption{Sensitivity to the sample size.}
  \label{fig:sim_sens_sample}
\end{figure}

\noindent\textbf{Sensitivity to the Connectivity}

Figure \ref{fig:sim_con_sample} shows the sensitivity of the algorithms to the average in-degree, an important metric to reflect the connectivity of causal structures.
The performance of SADA drops with the growth of average in-degree, caused mainly by the large causal cut set size in the dense causal structures.
As analyzed in Theorem \ref{theorem:convergence}, the causal cut finding strategy is highly dependent on the in-degree. Thus, the increasing average in-degree will reduce the
quality of the causal cut and increase the causal cut set size. Recall the conclusion in Theorem \ref{theorem:recall} and Theorem \ref{theorem:precision}, the sufficient condition of the improvement will be difficult to be satisfied in the partition with large causal cut set.
%
Though SADA's advantage over LiNGAM is small when the average in-degree is 1.75 on the linear non-Gaussian data, SADA is still competitive, for most of the real world causal structures are sparse as discussed in Table \ref{tab:dataset}. Similar conclusions could be drawn on the discrete data.

\begin{figure} [h]
  \centering
  \subfigure[Linear non-Gaussian model]{
    \includegraphics[width=0.47\textwidth]{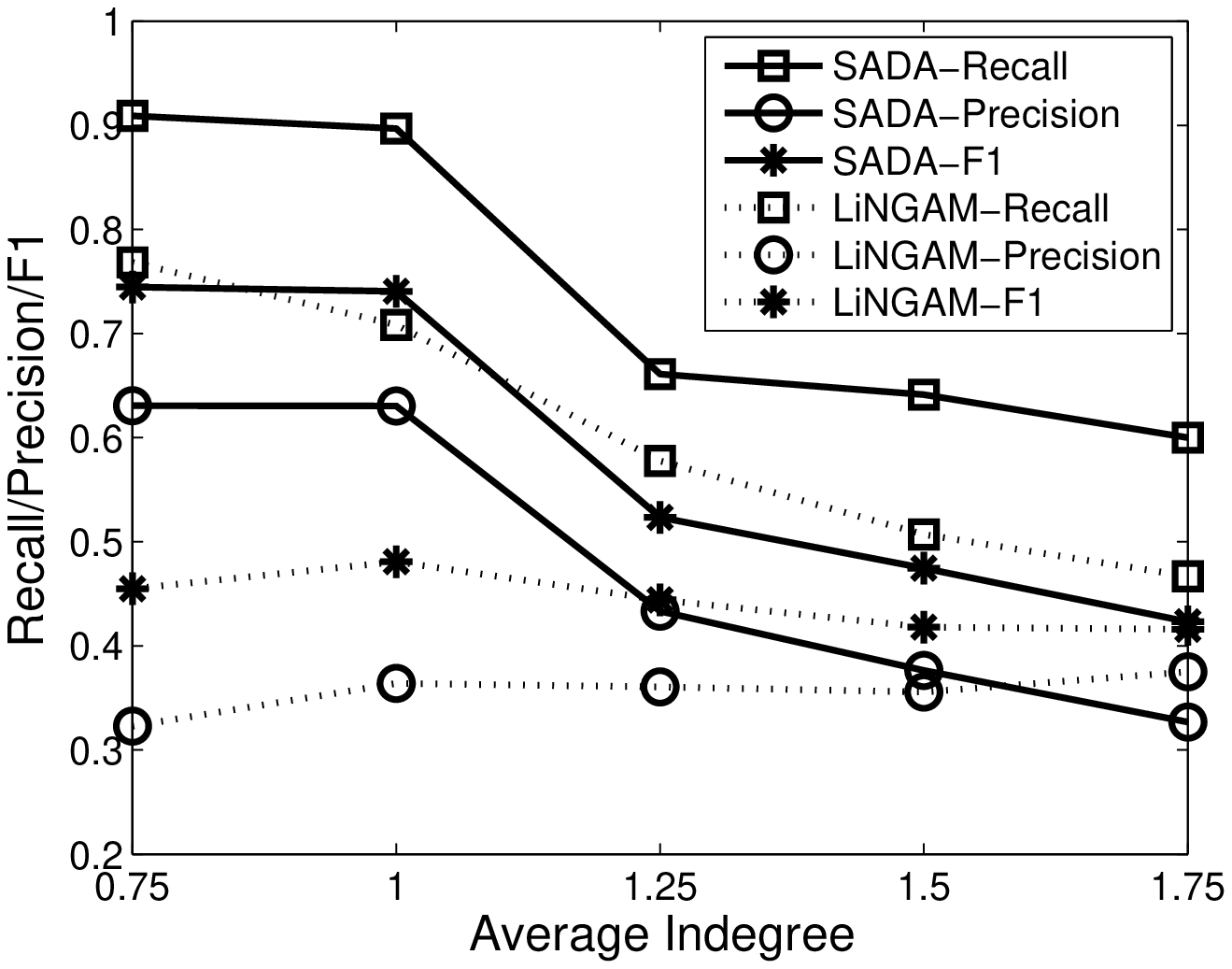}
    \label{fig:sim_sens_con_lingam}
  }
  \subfigure[Discrete model]{
    \includegraphics[width=0.47\textwidth]{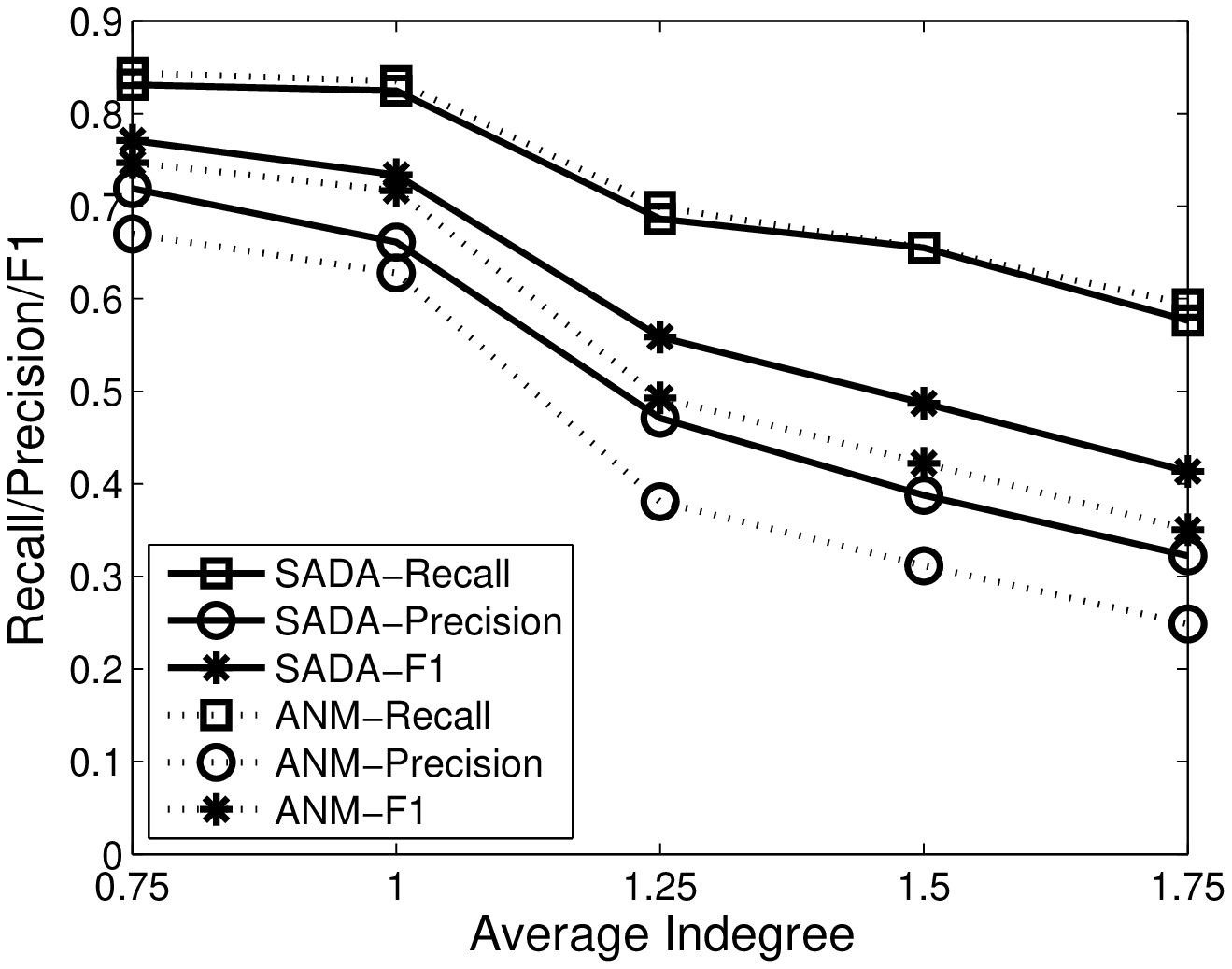}
    \label{fig:sim_sens_con_disc}
  }
  \caption{Sensitivity to the connectivity.}
  \label{fig:sim_con_sample}
\end{figure}

\noindent\textbf{Sensitivity to Noise}

This set of experiments are only conducted on the linear non-Gaussian data, as it is way too difficult to control the noise ratio in the generation of discrete data.
The noise has several effects in SADA, firstly moderate ratio of noise contributes to determined the direction of the causality in the basic causal solvers;
secondly the noise will decrease the reliability of the conditional independence tests and further reduce the quality of the partition; thirdly, too much noise also reduces the quality of the basic causal solvers.
When the noise weight is less than 0.3, both SADA and LiNGAM's performance is insensitive to the noise because of the trade-off between the first two effects.
When the noise ratio is larger than 0.3, both recall and precision reduce with the increasing of noise because of the third effect.


\begin{figure} [h]
  \centering
   \includegraphics[width=0.47\textwidth]{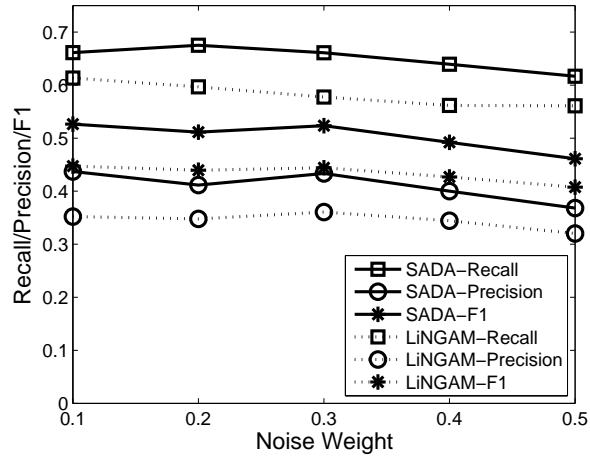}
  \caption{Sensitivity to the noise on linear non-Gaussian model.}
  \label{fig:sim_noise}
\end{figure}

\subsection{Results on Real-World Structures}

It generally, real-world Bayesian network structures cover a variety of applications, including, medicine (\emph{Alarm} dataset), weather forecasting (\emph{Hailfinder} dataset),  printer troubleshooting (\emph{Win95pts} dataset), pedigree of breeding pigs (\emph{Pigs} dataset) and linkage among genes (\emph{Link} dataset). The structural statistics of these Bayesian networks are summarized in \tabref{tab:dataset}. In all the Bayesian networks, the maximal degrees, i.e., the maximal number of parental variables in the networks, are no larger than 6, regardless of the total number of variables. This verifies the correctness of our sparsity assumption. 

\begin{table}[!ht]
\center
\caption{Statistics on the Datasets}
\label{tab:dataset}
\begin{tabular}{|c|c|c|c|}
\hline Dataset & Variable \# & Avg degree & Max degree\\
\hline \emph{Alarm} & 37 & 1.2432 &4 \\
\hline \emph{Hailfinder} &56 & 1.1786 &4 \\
\hline \emph{Win95pts} & 76 &0.9211 & 6 \\
\hline \emph{Pigs} & 441 & 1.3424 & 2 \\
\hline \emph{Link} & 724 & 1.5539 & 3 \\
\hline
\end{tabular}
\end{table}

\noindent\textbf{On Linear Non-Gaussian Model}

The causal cutting errors are reported in Figure \ref{fig:lingam:de}, on varying the number of samples generated by the Bayesian networks. Even when the samples size is $2|V|$, the highest causal cutting error is within 0.12. Moreover, the causal cutting errors generally decrease with the growth of sample size. These results reveal the fundamental advantage of SADA, such that the sufficient number of samples only depends on the sparsity of the causal structure but not the number of variables. Note that the baseline approach LiNGAM does not work when the number of samples are as small as $2|V|$.

\begin {figure} [!t]
\begin {center}
\includegraphics [width=0.47\textwidth]{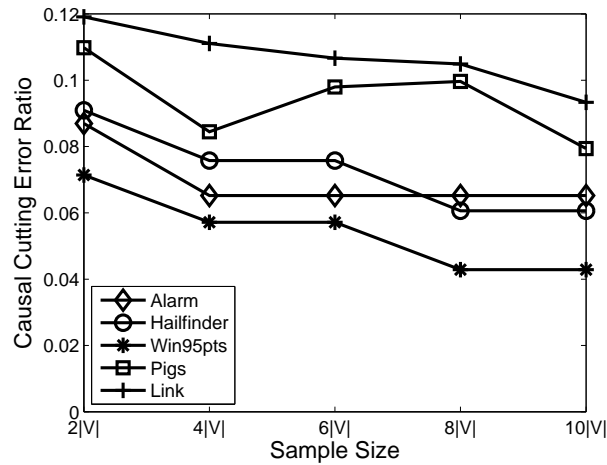}
\caption {causal cutting error ratio on linear non-Gaussian models.}
\label{fig:lingam:de}
\end {center}
\end {figure}

In the following experiments, we compare SADA against the baseline approach by fixing the sample size at $2|V|$. As shown in Table \ref{tab:lingam:rp}, SADA achieves significantly better F1 score on all of the five datasets. SADA is particularly doing well on precision, i.e., returning more accurate causal relations. SADA's division strategy is the main reason behind the improvement of precision on SADA. Specifically, the division on variables allows SADA to remove a large number of candidate variable pairs if they are assigned to $V_1$ and $V_2$. The basic causal solver is run on subproblem of much smaller scale, thus generating more reliable results. The Recall of SADA is comparable to LiNGAM on four of the datasets, and slightly worse on the other one. This shows that the unavoidable causal cutting error does not affect the recall under linear non-Guassian models.


\begin{table*}
\center \caption{Results on Linear Non-Gaussian Model}
\label{tab:lingam:rp}
\begin{tabular}{|c|c|c|c|c|c|c|c|c|c|c|c|c|}
\hline \multirow{2}{*}{Dataset} & \multicolumn{4}{|c|}{Recall} & \multicolumn{4}{|c|}{Precision} & \multicolumn{4}{|c|}{F1 Score}\\
\cline{2-13}       & \tiny{SADA} & \tiny{LiNGAM}  & \tiny{DLiNGAM} & \tiny{SICA} & \tiny{SADA} & \tiny{LiNGAM}  & \tiny{DLiNGAM} & \tiny{SICA}& \tiny{SADA} & \tiny{LiNGAM}  & \tiny{DLiNGAM} & \tiny{SICA} \\
\hline \emph{Alarm}      & \textbf{0.41} & 0.24 &0.02 & 0.20 & \textbf{0.36} & 0.30 & 0.20 & 0.33 & \textbf{0.38} & 0.27&0.04 &0.25 \\
\hline \emph{Hailfinder} & \textbf{0.52} & 0.24 & 0.23& 0.42 &\textbf{0.46} & 0.13 & 0.15 & 0.45&\textbf{0.49} & 0.17&0.18 &0.39 \\
\hline \emph{Win95pts}   & \textbf{0.57} & 0.41 & 0.07 &0.43&   \textbf{0.42} & 0.23 &0.10&0.45& \textbf{0.48} & 0.30 & 0.08 &0.44 \\
\hline \emph{Pigs}       & 0.56 & \textbf{0.57} &N.A.&N.A. & \textbf{0.23} & 0.12 &N.A.&N.A.& \textbf{0.33} & 0.19 &N.A.&N.A.\\
\hline \emph{Link}       & \textbf{0.62} & 0.53 &N.A.&N.A. & \textbf{0.25} & 0.07 &N.A.&N.A.& \textbf{0.36} & 0.13 &N.A.&N.A.\\
\hline
\end{tabular}
\end{table*}
\normalsize

\noindent \textbf{On Discrete Additive Noise Model}

The causal cutting error of SADA on the discrete data is presented in Figure \ref{fig:disc:de}, which shows similar property of the result on linear non-Gaussian models. This further verifies the generality of SADA on different data domains.

\begin {figure} [!t]
\begin {center}
\includegraphics [width=0.47\textwidth]{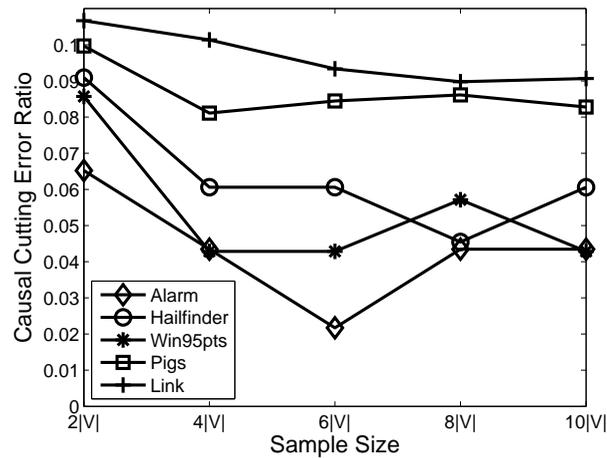}
\caption {Causal cutting error ratio on discrete models.}
\label{fig:disc:de}
\end {center}
\end {figure}

In this group of experiments, we fix the sample size at 2000, and report recall, precision and F1 score in Table \ref{tab:disc:rp}. Note that ANM is only applicable to domain with small number of variables. Because it cannot finish the computation on \emph{Pigs} and \emph{Link} in one week. This proves the improv
ement of SADA on scalability in terms of the variables. Generally speaking, the results in the table also verify the effectiveness of SADA, especially the enhancement on precision and F1 score.

\begin{table}[!ht]
\center
\caption{Results on Discrete Model}
\label{tab:disc:rp}
\begin{tabular}{|c|c|c|c|c|c|c|}
\hline \multirow{2}{*}{Dataset} & \multicolumn{2}{|c|}{Recall} & \multicolumn{2}{|c|}{Precision} & \multicolumn{2}{|c|}{F1 Score}\\
\cline{2-7}       & \tiny{SADA} & \tiny{ANM}  & \tiny{SADA} & \tiny{ANM}  & \tiny{SADA} & \tiny{ANM}  \\
\hline \emph{Alarm}      & \textbf{0.67} & 0.65 & \textbf{0.72} & 0.60 & \textbf{0.70} & 0.63 \\
\hline \emph{Hailfinder} & 0.71 & \textbf{0.76} & \textbf{0.57} & 0.45 & \textbf{0.63} & 0.56 \\
\hline \emph{Win95pts}   & 0.68 & \textbf{0.71} & \textbf{0.41} & 0.38 & \textbf{0.51} & 0.49 \\
\hline \emph{Pigs}       & \textbf{0.68} & N.A. & \textbf{0.50} & N.A. & \textbf{0.58} & N.A. \\
\hline \emph{Link}       & \textbf{0.69} & N.A. & \textbf{0.46} & N.A. & \textbf{0.56} & N.A. \\
\hline
\end{tabular}
\end{table}
\normalsize

As a conclusion, SADA shows excellent performance on 5 different domains with real-world Bayesian networks. SADA returns accurate causal structure when combined with two well known causal inference algorithms. The causal cut used to partition the problem does incur certain error on incorrect partitioning. Despite of the errors, SADA still outperforms ANM without partitioning on almost all settings.


\section{Conclusion}\label{sec:concl}

In this paper, we present a general and scalable framework, called SADA, to support causal structure inference, using a \emph{split-and-merge} strategy. In SADA, causal inference problem on a large variable set is partitioned into subproblems with overlapping subsets of variables, utilizing the concept of causal cut. Our proposal facilitates existing causation discovery algorithms to handle problem domains with more variables and less samples, which extend the application scenarios of causation discovery. Strong theoretical analysis proves the effectiveness, correctness and completeness guarantee of SADA under a general setting. Experimental results further verifies the usefulness of the new framework with two mainstream causation algorithms on linear non-Gaussian model and discrete additive noise model. Theoretical and experimental analysis of SADA reveal the fundamental advantage of our approach, that the required sample depends on the generating graph connectivity and not the size of the variable set; this yields up to exponential savings in sample relative to previously known algorithms.

%

While our methods haven shown improvement over existing methods, we believe there remains room for further enhancement. One possible direction is to attempt other existing randomized division strategies commonly used to tackle combinatorial problems on graph data. Another interesting problem is how to reduce the computational cost when subproblems have a large overlap on variables.



\ifCLASSOPTIONcaptionsoff
  \newpage
\fi

\bibliographystyle{IEEEtran}
\bibliography{IEEEabrv,ref}





\appendices

\section{Pseudocodes of Causal Structure Generation}

\begin{algorithm}[!h]
	\caption{Causal Structure Generator}\label{algo:csg}
	\SetKwFunction{CSGenerator}{\textbf{CSGenerator}}
	\CSGenerator($n$, $d$)\\
	\KwIn {$n$: the number of variable\label{key}s, $d$: the average in-degree.}
	\KwOut {$G$: Causal structure in the formal of boolean adjacency matrix}
	Set $G$ to an $n\times n$ false matrix;\\
	\For {$i = 1$ to $n$}
	{
		Generate a rand integer $n_p$ with mean $d$ in the range $[\lfloor{d} \rfloor, \lceil{d}\rceil]$;\\
		Set $n_p=\min\{n_p, i-1\}$;\\
		Set $j=0$;\\
		\While{$j < n_{p}$}
		{
			Generate a rand integer $k$ in the range $[1, i-1]$;\\
			\If {$G\left(i, k\right)=false$}
			{
				Set $G\left(i, k\right)=true$;\\
				Set $j=j+1$;
			}
		}
	}
\end{algorithm}

\section{Pseudocodes of Linear Non-Gaussian Data Generation}
\begin{algorithm}[!h]
	\caption{Linear Non-Gaussian Data Generator}\label{algo:ldg}
	\SetKwFunction{LiNGAMDataGenerator}{\textbf{LiNGAMDataGenerator}}
	\LiNGAMDataGenerator($CS$, $w$, $n$)\\
	\KwIn {$G$: Causal structure in the formal of $n\times n$ boolean adjacency matrix,$w$: noise weight, $m$: the number of samples.}
	\KwOut {$D$: Generated sample}
	\For {$i = 1$ to $n$}
	{
		$P_{i}$ is the parent variable set of $v_i$ obtained from $G$;\\
		Generate $m\times 1$ rand vector $U$ in $[0, 1]$ with Non-Gaussian distribution;\\
		Normalize $U$ to mean 0 and variance 1.\\
		Set the value of $v_i$, $D_i=w*U$;\\
		\ForEach {$v_j \in P_{i}$}
		{
			Set $D_i = D_i+ D_j$;\\
		}
		Normalize $D_i$ to mean 0 and 1 variance.\\
	}
\end{algorithm}
\end{document}